\newtheorem{theorem}{Theorem}
\crefname{section}{Sec.}{Secs.}
\Crefname{section}{Section}{Sections}
\Crefname{table}{Table}{Tables}
\crefname{table}{Tab.}{Tabs.}
\definecolor{turquoise}{cmyk}{0.65,0,0.1,0.3}
\definecolor{purple}{rgb}{0.65,0,0.65}
\definecolor{dark_green}{rgb}{0, 0.5, 0}
\definecolor{orange}{rgb}{0.8, 0.6, 0.2}
\definecolor{red}{rgb}{0.8, 0.2, 0.2}
\definecolor{darkred}{rgb}{0.6, 0.1, 0.05}
\definecolor{blueish}{rgb}{0.0, 0.3, .6}
\definecolor{light_gray}{rgb}{0.7, 0.7, .7}
\definecolor{pink}{rgb}{1, 0, 1}
\definecolor{greyblue}{rgb}{0.25, 0.25, 1}
\renewcommand{\paragraph}[1]{\vspace{1em}\noindent\textbf{#1}.}
\begin{document}

\title{Towards Efficient and Scalable Sharpness-Aware Minimization}

\author{Yong Liu\textsuperscript{1}, \quad Siqi Mai\textsuperscript{1}, \quad Xiangning Chen\textsuperscript{2}, \quad Cho-Jui Hsieh\textsuperscript{2}, \quad Yang You\textsuperscript{1}\\
\textsuperscript{1}Department of Computer Science, National University of Singapore \\
\textsuperscript{2}Department of Computer Science,  University of California, Los Angeles \\
{\tt\small \{liuyong, siqimai, youy\}@comp.nus.edu.sg},
{\tt\small \{xiangning, chohsieh\}@cs.ucla.edu}
}


\maketitle

\begin{abstract}

Recently, Sharpness-Aware Minimization (SAM), which connects the geometry of the loss
landscape and generalization, has demonstrated significant performance boosts on training large-scale models such as vision transformers. 
However, the update rule of SAM requires two sequential (non-parallelizable) gradient computations at each step, which can double the computational overhead. In this paper, we propose a novel algorithm LookSAM - that only periodically calculates the inner gradient ascent, to significantly reduce the additional training cost of SAM. The empirical results illustrate that LookSAM achieves similar accuracy gains to SAM while being tremendously faster - it enjoys comparable computational complexity with first-order optimizers such as SGD or Adam. 
To further evaluate the performance and scalability of LookSAM, we incorporate a layer-wise modification and perform experiments in the large-batch training scenario, which is more prone to converge to sharp local minima. We are the first to successfully scale up the batch size when training Vision Transformers (ViTs). With a 64k batch size, we are able to train ViTs from scratch in minutes while maintaining competitive performance.
\end{abstract}

\section{Introduction}
\label{sec:intro}
%


It has been observed that sharp local minima usually leads to significantly dropped generalization performance of deep networks, and many methods have been proposed for mitigating this issue~\cite{smith2017bayesian, kwon2021asam, dziugaite2017computing,chaudhari2019entropy,izmailov2018averaging,jin2018local}. 
In particular, \citet{foret2020sharpness} recently  proposed an algorithm named Sharpness Aware Minimization (SAM), which explicitly penalizes the sharp minima and biases the convergence to a flat region. SAM has been used to achieve state-of-the-art performance in many applications. For instance, \citet{chen2021vision} showed that SAM optimizer can improve the validation accuracy of Vision Transformer models (ViTs)~\cite{dosovitskiy2020image} on ImageNet-1k by a significant amount (+5.3\% when training from scratch). However, the update rule of SAM involves two sequential (non-parallelizable) gradient computations at each step, which will double the training time. 

In this paper, we aim to improve the efficiency of SAM and apply it to large-scale training problems. Each step of SAM consists of two gradient computations -- one for  adversarial perturbation to the weights and then another one with the perturbed weights for the final update.  
A naive idea to speedup SAM is to 
compute the first gradient (adversarial perturbation on weights) only periodically and use standard SGD/Adam updates in between. 
Unfortunately, this leads to significantly degraded performance, as shown in our experiments. To resolve this issue, we decompose the SAM's update direction into two components --- the one that lies parallel to the original SGD direction and the other orthogonal component that biases the learning towards a flat region. We show that the second direction tends to remain similar across nearby iterations, both empirically and theoretically, and develop a novel LookSAM optimizer which reuses this direction across nearby iterations. The resulting LookSAM only needs to periodically calculate the inner gradient ascent and significantly reduce the computational complexity of SAM while maintaining similar generalization performance. 

As SAM has become a crucial component for training large-scale Vision Transformer models (ViTs)~\cite{chen2021vision}, 
to further evaluate the performance and scalability of the proposed algorithm, we consider a challenging task --- applying LookSAM to conduct large-batch training for ViTs. As pointed out in \cite{you2017large, you2019large}, large-batch training often introduces the non-uniform instability problem across different layers. Hence, we also adopt a layer-wise scaling rule for weight perturbation, namely Look-LayerSAM optimizer. The proposed optimizer can successfully train ViTs with 64K batch size within an hour while maintaining competitive performance.



Our contributions can be summarized in three folds. 
\begin{compactitem}
    \item We develop a novel algorithm, called LookSAM, to speed up the training of SAM.  
    Instead of computing the inner gradient ascent at every step, the proposed LookSAM computes it periodically and reuses the direction that promotes to flat regions. 
    The empirical results illustrate that LookSAM achieves similar accuracy gains to SAM while enjoying comparable computational complexity with first-order optimizers such as SGD or Adam.
    \item 
    Inspired by the successes of layer-wise scaling proposed in large-batch training~\cite{you2017scaling,you2019large}, we develop an algorithm to scale up the batch size of LookSAM by adopting layer-wise scaling rule for weight perturbation (Look-LayerSAM). 
    The proposed Look-LayerSAM can scale up the batch size to 64k, which is a new record for ViT training and is $16 \times$ compared with previous training settings.
    \item Our proposed Look-LayerSAM can achieve \textbf{$\bf{\sim 8\times}$ speedup} over the training settings in~\cite{dosovitskiy2020image} with a 4k batch size, and we can finish \textbf{the ViT-B-16 training in 0.7 hour}. To the best of our knowledge, this is a new speed record for ViT training.
\end{compactitem}

\section{Related Work}
\label{sec:related}
In this section, we will first review related work on mitigating the sharp local minima problem, and then describe recent developments in large batch training.

\paragraph{Sharp Local Minima}
Sharp local minima would largely influence the generalization performance of deep networks~\cite{smith2017bayesian, kwon2021asam, dziugaite2017computing,chaudhari2019entropy,izmailov2018averaging,jin2018local}. 
Recently, many studies have attempted to explore the studies of sharp local minima, thus to address the optimization problem \cite{yi2019positively,tsuzuku2020normalized,dinh2017sharp,li2017visualizing,chaudhari2019entropy,kwon2021asam,he2019asymmetric,foret2020sharpness}. For example, \citet{jastrzkebski2017three} state that three factors - learning rate, batch size, and gradient covariance, can influence the minima found by SGD. Besides,
\citet{chaudhari2019entropy} propose a local-entropy-based objective function that favors flat regions during training, to avoid approaching the sharp valleys and bad generalization. \citet{he2019asymmetric} observe the loss surfaces and introduce the concept of asymmetric valleys to derive a deeper understanding of flat and sharp minima. By the discovery of Fisher Information Matrix (FIM) as an implicit
regularizer of SGD, \citet{jastrzebski2021catastrophic} try to explicitly penalize the trace of the FIM to solve the problem of sharp minima.
\citet{wen2018smoothout} introduce the \textit{SmoothOut} framework to smooth out sharp minima and thereby improve generalization.
More recently,  Sharpness-Aware Minimization (SAM)~\cite{foret2020sharpness} introduce a novel procedure that can simultaneously minimize loss value and loss sharpness to narrow the generalization gap. It presents rigorous empirical results over a variety of benchmark experiments and achieves state-of-the-art performance. The main focus of this paper is on improving the efficiency and scalability of SAM. 
\paragraph{Large-Batch Training}
Large-batch training is an important direction for distributed machine learning, which can improve the utilization of large-scale clusters and accelerate the training process. 
However, training with a large batch size incurs additional challenges \cite{keskar2016large,hoffer2017train}. \citet{keskar2016large} illustrates that large-batch training is prone to converge to sharp local minima and cause a huge generalization gap. The main reason is that the number of interactions will decrease when scaling up the batch size if we fix the number of epochs.
Traditional methods try to carefully tune the hyper-parameters to narrow the generalization gap, such as learning rate, momentum, and label smoothing \cite{goyal2017accurate,li2017scaling,you2018imagenet,shallue2018measuring}. \citet{goyal2017accurate} propose warmup to better tune the learning rate for training, which tries to increase the learning rate from a small value at the beginning stage and then start to decrease after increased to the target value. Leveraging the warmup training strategy, \citet{goyal2017accurate} can scale up the batch size to 8,192 for ResNet-50 \cite{He_2016_CVPR} on ImageNet-1k \cite{deng2009imagenet}. However, these heuristic approaches cannot be regarded as a principle solution for large-batch training \cite{shallue2018measuring}. 

Recently, to avoid these hand-tuned methods, adaptive learning rate on large-batch training has gained enormous attention from researchers \cite{reddi2018adaptive,reddi2019convergence,zhang2019adam}. Many recent works attempt to use adaptive learning rate to scale the batch size for ResNet-50  on ImageNet \cite{martens2015optimizing,iandola2016firecaffe,akiba2017extremely,smith2017don,devarakonda2017adabatch,codreanu2017scale,you2018imagenet,jia2018highly,osawa2018second,you2019large,yamazaki2019yet}. In particular, \citet{you2017scaling} proposed layer-wise adaptive learning rate algorithm LARS \cite{you2017scaling} to scale the batch size to 32k for ResNet-50. Based on LARS optimizer,
\citet{ying2018image} can finish the ResNet-50 training in 2.2 minutes through TPU v3 Pod \cite{ying2018image}. \citet{liu2021concurrent} use adversarial learning to further scale the batch size to 96k. 
In addition, \citet{you2019large} propose the LAMB optimizer to scale up the batch size when training BERT, resulting in a 76 minutes training time.

\section{Method}
In this section, we will first give an overview of the SAM optimizer and discuss the computational overhead introduced by SAM. The proposed algorithms, including LookSAM and Layer-wise LookSAM will then be introduced in full detail. 

\subsection{Overview of SAM}
\label{section_3 sam}

Let $\mathcal{S} = \{(x_i, y_i)\}_{i=1}^n$ be the training dataset, where each sample $(x_{i}, y_{i})$ follows the distribution $\mathcal{D}$. Let $f(x; \boldsymbol{w})$ be the neural network model with trainable parameter $\boldsymbol{w} \in \mathbb{R}^p$. The loss function corresponding to an input $x_{i}$ is given by $l(f(x_{i}; \boldsymbol{w}), y_{i}) \in R^+$, shortened to $l(x_{i})$ for convenience. The empirical training loss can be defined as $\mathcal{L}_{S} = \frac{1}{n}\sum_{i=1}^{n}l(f(x_{i};\boldsymbol{w}), y_{i})$.
In the SAM algorithm~\citep{foret2020sharpness}, we need to find the parameters whose neighbors within the $\ell_p$ ball have low training loss $\mathcal{L}_{S}(\boldsymbol{w})$ through the following modified objective function: 
\begin{equation}
    \mathcal{L}_{S}^{SAM}(\boldsymbol{w}) =  \max_{\|\boldsymbol{\epsilon}\|_{p}\leq \rho} \mathcal{L}_{S}(\boldsymbol{w}+\boldsymbol{\epsilon}
    ),
\label{loss_sam}
\end{equation}
where $p \geq 0$ and $\rho$ is the radius of  the $\ell_{p}$ ball. 
As calculating the optimal solution of inner maximization is infeasible, 
SAM uses one-step gradient ascent to approximate it: 

\begin{equation}
    \boldsymbol{\hat{\epsilon}}(\boldsymbol{w}) = \rho \nabla_{\boldsymbol{w}}\mathcal{L}_{S}(\boldsymbol{w}) /  \|\nabla_{\boldsymbol{w}}\mathcal{L}_{S}(\boldsymbol{w})\| \approx \arg\max_{\|\boldsymbol{\epsilon}\|_{p}\leq \rho} \mathcal{L}_{S}(\boldsymbol{w}+\boldsymbol{\epsilon}). 
\label{sam_grad}
\end{equation}
Finally, SAM computes the gradient with respect to perturbed model $\boldsymbol{w}+\boldsymbol{\hat{\epsilon}}$ for the update: 
\begin{equation}
    \nabla_{\boldsymbol{w}}\mathcal{L}_{S}^{SAM}(\boldsymbol{w})\approx \nabla_{\boldsymbol{w}}\mathcal{L}_{S}(\boldsymbol{w})|_{\boldsymbol{w}+\boldsymbol{\hat{\epsilon}}}.
\label{sam_calculate}
\end{equation}
However, this update rule involves  two sequential gradient computations at each step based, which will double the computational cost for each update.


\subsection{LookSAM}
\label{section looksam}


The main drawback of SAM lies in its computational overhead. The update rule (Eq~\ref{sam_calculate}) demonstrates that each iteration of SAM needs two sequential gradient computations, one for obtaining $\boldsymbol{\hat{\epsilon}}$ and another for computing the gradient descent update (see Figure~\ref{looksam_visual}). This will double the computational complexity compared to SGD or Adam optimizers. Further, these two gradient evaluations are not parallelizable, which will be a bottleneck in large-batch training. 
However, recent work has demonstrated that SAM yields significant accuracy gain when training vision transformer models~\citep{chen2021vision} (e.g., more than 5\% accuracy improvement when training ImageNet from scratch), and further, SAM's ability to escape from sharp minima is valuable in large-batch training. In particular, \citet{keskar2016large} showed that the main challenge in large-batch training is the convergence to sharp local minima due to insufficient noise in first-order stochastic updates, and SAM is a natural remedy for this problem if it can be conducted efficiently. 
These motivate our work on improving SAM's computational efficiency.

\begin{figure}[h]
\begin{center}
\includegraphics[width=0.8\linewidth]{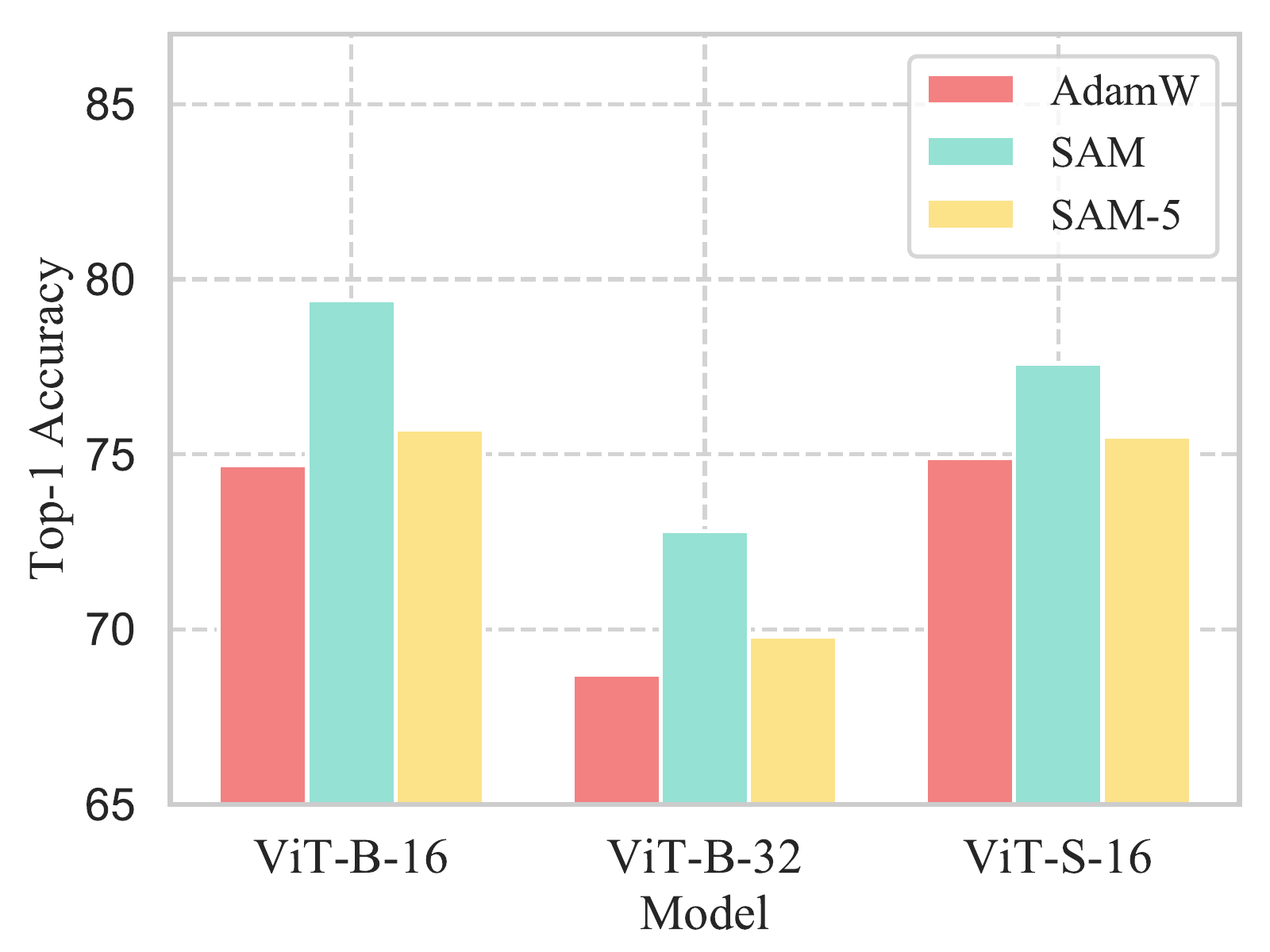}
\end{center}
\caption{Accuracy of SAM-5, SAM and vanilla ViT on ImageNet-1k. SAM-5 indicates the method that calculating SAM gradients every 5 steps.}
\label{sam_analysis}
\end{figure}


\begin{figure}[h]
\begin{center}
\includegraphics[width=0.8\linewidth]{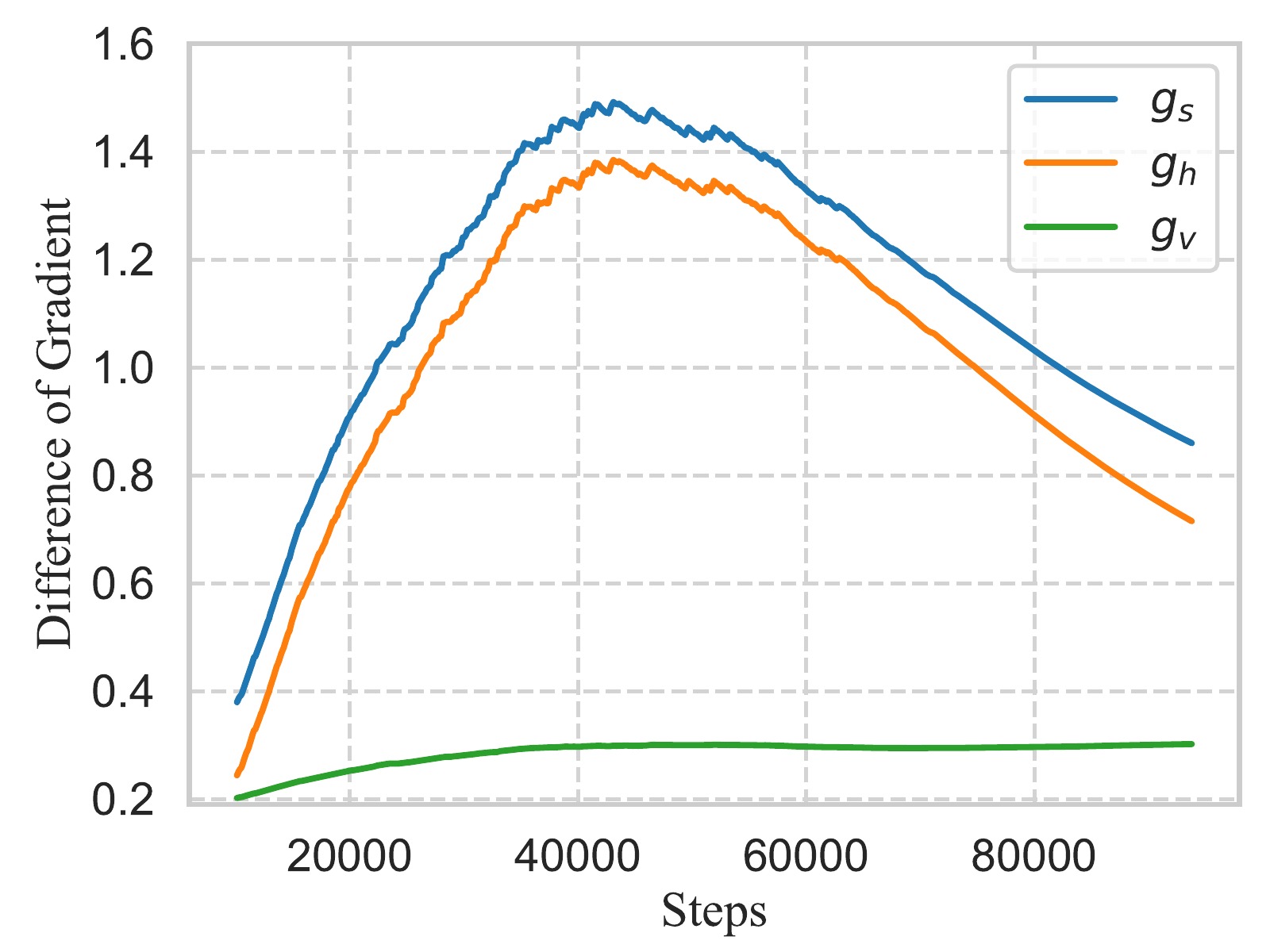}
\end{center}
\caption{Difference of gradients between every 5 steps for $\boldsymbol{g_s}$, $\boldsymbol{g_h}$, and $\boldsymbol{g_v}$ \textcolor{black}{(i.e., $||g_s^t-g_s^{t+k}||$)}. $\boldsymbol{g_v}$ that leads to a smoother region changes much slower than $\boldsymbol{g_s}$ and $\boldsymbol{g_h}$. }
\label{diff_exp}
\end{figure}

To reduce the computation of the two sequential gradients in SAM, a naive method is to use SAM update only at every $k$ step, resulting in $\frac{1}{k}\times$ additional calculation on average. We name this method SAM-$k$, where $k$ indicates the frequency of using SAM. Unfortunately, this naive method does not work well. As shown in Figure \ref{sam_analysis}, we use ViT as the base model and the experimental results illustrate that the accuracy degradation is huge when using SAM-5, although the efficiency is significantly improved. For example, SAM can improve the accuracy from 74.7\% to 79.4\% for ViT-B-16. However, the accuracy drops to 75.7\% when using SAM-5, which significantly degrades the performance of SAM. This motivates us to explore how to effectively improve the efficiency of SAM while maintaining similar generalization performance. 

In the following, we propose a novel LookSAM algorithm to address this challenge, where the main idea is to study how to reuse information to prevent computing SAM's gradient every time. 
As shown in Figure \ref{looksam_visual}, the SAM's gradient $\boldsymbol{g_s} = \nabla_{\boldsymbol{w}}\mathcal{L}_{S}(\boldsymbol{w})|_{\boldsymbol{w}+\boldsymbol{\hat{\epsilon}}}$  promotes to a flatter region (the blue arrow) compared with the SGD gradient (the yellow arrow). To gain more intuition about this flat region, we rewrite the update of SAM based on Taylor Expansion:

\begin{equation}\label{e1}
\begin{aligned}
\nabla_{\boldsymbol{w}}&\mathcal{L}_{S}(\boldsymbol{w})|_{\boldsymbol{w}+\boldsymbol{\hat{\epsilon}}} = \nabla_{\boldsymbol{w}}\mathcal{L}_{S}(\boldsymbol{w}+\boldsymbol{\hat{\epsilon}})\\
&\approx \nabla_{\boldsymbol{w}}[\mathcal{L}_{S}(\boldsymbol{w}) + \boldsymbol{\hat{\epsilon}} \cdot \nabla_{\boldsymbol{w}}\mathcal{L}_{S}(\boldsymbol{w})] \\
&= \nabla_{\boldsymbol{w}}[\mathcal{L}_{S}(\boldsymbol{w}) + \frac{\rho}{\|\nabla_{\boldsymbol{w}}\mathcal{L}_{S}(\boldsymbol{w})\|} \ \nabla_{\boldsymbol{w}}\mathcal{L}_{S}(\boldsymbol{w}) \cdot  \nabla_{\boldsymbol{w}}\mathcal{L}_{S}(\boldsymbol{w})^{T}] \\
&= \nabla_{\boldsymbol{w}}[\mathcal{L}_{S}(\boldsymbol{w}) + \rho \  \|\nabla_{\boldsymbol{w}}\mathcal{L}_{S}(\boldsymbol{w})\|]
\end{aligned}
\end{equation}

\noindent
We find the SAM gradient includes two parts: the original gradient $\nabla_{\boldsymbol{w}}\mathcal{L}_{S}(\boldsymbol{w})$ and the gradient of the L2-Norm of original gradient $\|\nabla_{\boldsymbol{w}}\mathcal{L}_{S}(\boldsymbol{w})\|$. We think optimizing L2-Norm of gradient can prompt the model converge to flat region as the flat region usually means a low gradient norm value. Therefore, the update of SAM can be divided into two parts: the first part (denoted as $\boldsymbol{g_h}$) is to decrease the loss value, and the second part (denoted as $\boldsymbol{g_v}$) is to bias the update to a flat region. More specifically, $\boldsymbol{g_h}$ is in the direction of the vanilla SGD's gradient, which needs to be calculated at each step even without SAM. Therefore, the additional computational cost of SAM is mainly induced by the second part $\boldsymbol{g_v}$. 
Given the SAM's gradient (the red arrow) and the direction of SGD's gradient ($\boldsymbol{g_h}$), we can conduct a projection to obtain $\boldsymbol{g_v}$: 
\begin{equation}
    \label{gv calculation}
    \boldsymbol{g_v} = \nabla_{\boldsymbol{w}}\mathcal{L}_{S}(\boldsymbol{w})|_{\boldsymbol{w}+\boldsymbol{\hat{\epsilon}}} \cdot \sin(\theta),  
\end{equation}
where $\theta$ is the angle between the SGD's gradient and SAM's gradient. 
Empirically, we observe that $\boldsymbol{g_v}$ changes much slower than $\boldsymbol{g_h}$ and  $\boldsymbol{g_s}$. In Figure \ref{diff_exp} we plot the change of these three components between iteration $t$ and iteration $t+5$ throughout the whole training process of SAM, and the results indicate that 
the difference of $\boldsymbol{g_v}$ (the green line) shows a much more stable pattern than that of $\boldsymbol{g_h}$ (the orange line) and $\boldsymbol{g_{s}}$ (the blue line). Intuitively, this means the direction pointing to the flat region won't change significantly within a few iterations. 

\begin{figure}[h]
\begin{center}
\includegraphics[width=0.8\linewidth]{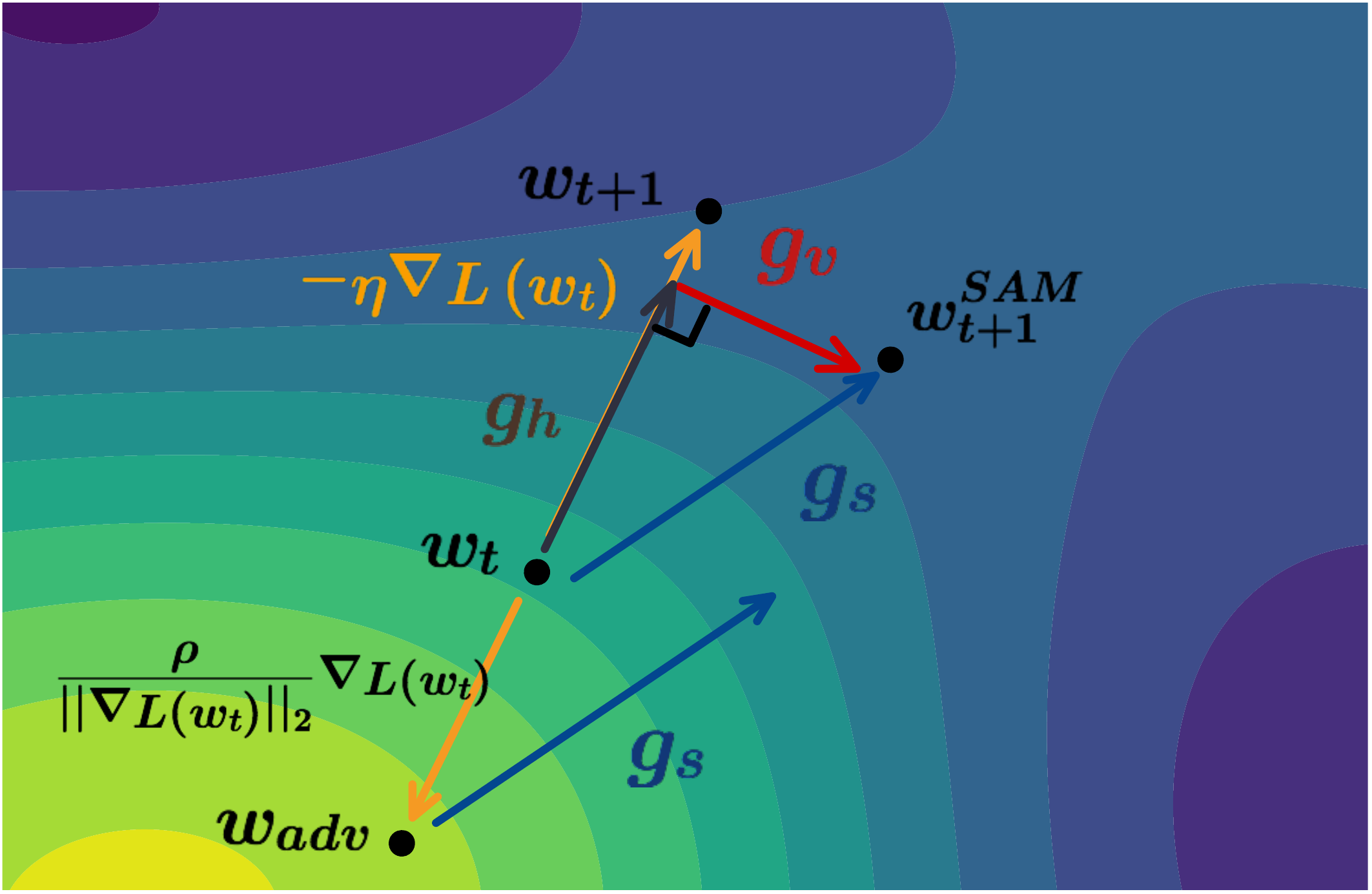}
\end{center}
\caption{Visualization of LookSAM. The blue arrow $\boldsymbol{g_s}$ is SAM's gradient targeting to a flatter region. The yellow arrow $-\eta\nabla_{\boldsymbol{w}}\mathcal{L}_{S}(\boldsymbol{w})$ indicates the SGD gradient. $\boldsymbol{g_h}$ (the brown arrow) and $\boldsymbol{g_v}$ (the red arrow) are the orthogonal gradient components of $\boldsymbol{g_s}$, parallel and vertical to the SGD gradient, respectively.}
\label{looksam_visual}
\end{figure}

Therefore, we propose to only calculate the exact SAM's gradient every $k$ steps and reuse the projected gradient $\boldsymbol{g_v}$ for the intermediate steps. 
The pseudocode is shown in Algorithm \ref{alg:lookSAM}. We calculate the original SGD gradient $\boldsymbol{g}=\nabla_{\boldsymbol{w}}\mathcal{L_B}(\boldsymbol{w})$ based on the sample minibatch $\mathcal{B}$ at every step. For every $k$ step, we compute SAM's gradient and meanwhile get the projected component $\boldsymbol{g_v}$ (Equation \ref{gv calculation})  that will be  reused   for the subsequent steps. At the following $k$ steps, we only calculate the SGD gradient, armed with the projected component to get the approximated SAM gradient. In other words, we train the model and try to mimic the SAM procedure, by sufficiently
distilling the information from SAM gradient every $k$ step. This contributes to the considerable reduction of computation cost, coincident with a smooth convergence that could bias the learning towards a flat region.

To reuse ${\boldsymbol{g_v}}$ in intermediate steps to mimic the SAM's update, we add $\boldsymbol{g_v}$ to the current gradient $\boldsymbol{g}$ (computed on the clean loss). As the empirical analysis in Figure \ref{diff_exp} suggests that  $\boldsymbol{g_s}$ and $\boldsymbol{g_h}$ are not very stable,  we propose an adaptive ratio to combine them. More specifically, we define $\frac{\|\boldsymbol{g}\|}{\|\boldsymbol{g_v}\|}$ as the adaptive ratio to scale $\alpha$. In this way, we can ensure that the norms of $\boldsymbol{g}$ and $\boldsymbol{g_v}$ are at the same scale. 

To demonstrate the reusing procedure, we theoretically derive the change of $\boldsymbol{g_v}$ within $k$ steps in the following way (full derivation is shown in Appendix \ref{deriviation}),

\begin{equation}
\begin{aligned}
\|\boldsymbol{g_{v,t}}& - \boldsymbol{g_{v,t+k}}\| 
\approx\big\|\frac{1}{2}\rho \bigtriangledown_{\boldsymbol{w_t}}^{2}\mathcal{L_S}(\boldsymbol{w_t}) \frac{\bigtriangledown_{\boldsymbol{w_t}}\mathcal{L_S}(\boldsymbol{w_t})}{\|\bigtriangledown_{\boldsymbol{w_t}}\mathcal{L_S}(\boldsymbol{w_t})\|} \\
&- \frac{1}{2}\rho \bigtriangledown_{\boldsymbol{w_{t+k}}}^{2}\mathcal{L_S}(\boldsymbol{w_{t+k}}) \frac{\bigtriangledown_{\boldsymbol{w_{t+k}}}\mathcal{L_S}(\boldsymbol{w_{t+k}})}{\|\bigtriangledown_{\boldsymbol{w_{t+k}}}\mathcal{L_S}(\boldsymbol{w_{t+k}})\|}\big\|, 
\end{aligned}
\end{equation}
where we ignore the third order terms in Taylor expansion in the derivation. 
As calculating SAM gradients leads to a relatively flat region, the second order derivative is very small, from which we can infer the change of $\boldsymbol{g_v}$ component is small compared to the gradient. 
This supports our algorithm in reusing $\boldsymbol{g_v}$ and re-calculating it only periodically.
\\

\begin{algorithm}[tb]
   \caption{LookSAM}
   \label{alg:lookSAM}
\begin{algorithmic}
   \STATE {\bfseries Input:} $x \in \mathbb{R}^{d}$, learning rate $\eta_{t}$, update frequency $k$.
   \FOR{$t \leftarrow 1$ {\bfseries to} $T$}
   \STATE Sample Minibatch $\mathcal{B}=\{(x_i,y_i), \cdots, (x_{|\mathcal{B}|}, y_{|\mathcal{B}|})\}$ from $X$.
   \STATE Compute gradient $\boldsymbol{g} = \nabla_{\boldsymbol{w}}\mathcal{L}_{\mathcal{B}}(\boldsymbol{w})$ on minibatch $\mathcal{B}$.
   \IF{t\%k = 0}
   \STATE Compute $\epsilon(\boldsymbol{w}) = \rho 
   \cdot\nabla_{\boldsymbol{w}}\mathcal{L}_{S}(\boldsymbol{w}) /  \|\nabla_{\boldsymbol{w}}\mathcal{L}_{S}(\boldsymbol{w})\| 
   $
   \STATE Compute SAM gradient: $\boldsymbol{g_{s}} =  \nabla_{\boldsymbol{w}}L_{\mathcal{B}}(\boldsymbol{w})|_{\boldsymbol{w}+\boldsymbol{\epsilon}(\boldsymbol{w})}$
   \STATE $\boldsymbol{g_v} =  \boldsymbol{g_s} - \|\boldsymbol{g_s}\|  \cos(\theta) \cdot \frac{\boldsymbol{g}}{\|\boldsymbol{g}\|}$, where $\cos(\theta) = \frac{\boldsymbol{g}\cdot\boldsymbol{g_s}}{\|\boldsymbol{g}||\boldsymbol{g_s}\|}$ 
   \ELSE
   \STATE $\boldsymbol{g_{s}} = \boldsymbol{g} + \alpha \cdot \frac{\|\boldsymbol{g}\|}{\|\boldsymbol{g_v}\|} \cdot \boldsymbol{g_{v}}$ 
   \ENDIF
   \STATE Update weights: $\boldsymbol{w_{t+1}} = \boldsymbol{w_{t}} - \eta_{t} \cdot \boldsymbol{g_{s}}$
   \ENDFOR
\end{algorithmic}
\end{algorithm}

\subsection{Layer-Wise LookSAM}
\label{section 4.2}

When scaling up the batch size of SAM or LookSAM in large-batch training, we observe degraded performance as shown in the experiments (see Table~\ref{exp_vit_large}).  \citet{you2017scaling,you2019large} showed that the training stability  with large batch training varies for each layer and applied a layer-wise adaptive learning rate scaling method to improve AdamW (also known as LAMB) to resolve this issue. We conjecture this also affects the SAM procedure, which motivates the following development of layer-wise SAM (LayerSAM) optimizer. As we are trying to introduce the layer-wise scaling into the inner maximization of SAM, it is different from \cite{you2019large} which applied the scaling to the final update direction of AdamW. 
\begin{table*}[h]
\renewcommand\arraystretch{1.2}
\caption{Accuracy of Different Models on CIFAR100. We use ResNet-18, ResNet-50 and WideResNet to evaluate the performance of LookSAM, using SGD-Momentum (SGD-M) as the base optimizer. We set the training epoch as 200 and batch size as 128.}
\label{tab_cifar100}
\begin{center}
\begin{tabular}{l|c|cc|cc|cc|c}
\toprule
\multicolumn{1}{c|}{\bf Model} &\multicolumn{1}{c|}{\bf SGD-M}
&\multicolumn{1}{c}{\bf SAM-5}
&\multicolumn{1}{c|}{\bf LookSAM-5}
&\multicolumn{1}{c}{\bf SAM-10}
&\multicolumn{1}{c|}{\bf LookSAM-10} 
&\multicolumn{1}{c}{\bf SAM-20}
&\multicolumn{1}{c|}{\bf LookSAM-20}
&\multicolumn{1}{c}{\bf SAM} 
\\ \hline 
\multicolumn{1}{l|}{\bf ResNet-18}       &78.9   &80.4 & \textbf{80.7} & 80.0  & \textbf{80.4} & 79.7 & \textbf{80.0} & 80.7 \\ 
\multicolumn{1}{l|}{\bf ResNet-50}       &81.4   &82.5 & \textbf{83.3} & 82.3  & \textbf{82.8} & 82.1 & \textbf{82.4} &83.3 \\
\multicolumn{1}{l|}{\bf WRN-28-10}       & 81.7     & 83.8 & \textbf{84.4} & 83.3  & \textbf{84.3} & 82.9 & \textbf{83.6}  & 84.4\\
\bottomrule
\end{tabular}
\end{center}
\end{table*}

Let $\boldsymbol{\Lambda}$ denote a diagonal $l\times{l}$ matrix $\boldsymbol{\Lambda}=\text{diag}(\xi^1,\xi^2,...,\xi^l)$, where $\xi^j$(j=1,2,...,$l$) is the layer-wise adaptive rate and can be calculated by $\frac{\|\boldsymbol{w^j}\|}{\|\bigtriangledown_{\boldsymbol{w}}\mathcal{L_S}(\boldsymbol{w})^j\|}$ for each layer. 

We then adopt this scaling into the inner maximization of SAM as: 

\begin{equation}
   \mathcal{\tilde{L}_S}(\boldsymbol{w})=
    \max_{\|\boldsymbol{\Lambda\epsilon}\|_{p}\leq{\rho}}\mathcal{L_S}(\boldsymbol{w}+\boldsymbol{\epsilon}). 
    \label{LSAM Approx}
\end{equation}
Here the main idea is to scale each dimension of the perturbation vector according $\boldsymbol{\Lambda}$. 
Similar to SAM, the weight perturbation in LayerSAM is the solution of the first-order approximation of~\eqref{LSAM Approx}. With the added $\boldsymbol{\Lambda}$, the approximate inner solution can be written as
\begin{equation}
    \boldsymbol{\tilde{\epsilon}}=\rho\operatorname{sign}(\bigtriangledown_{\boldsymbol{w}}\mathcal{L_S}(\boldsymbol{w}))\boldsymbol{\Lambda}\frac{|\bigtriangledown_{\boldsymbol{w}}\mathcal{L_S}(\boldsymbol{w})|^{q-1}}{(\|\bigtriangledown_{\boldsymbol{w}}\mathcal{L_S}(\boldsymbol{w})\|_q^q)^{\frac{1}{p}}},
    \label{LSAM_epsilon}
\end{equation}
where $\frac{1}{p}+\frac{1}{q}=1$.
Equation \ref{LSAM_epsilon} gives us the layer-wise calculation of $\boldsymbol{\tilde{\epsilon}}$ to scale up the batch size when using LookSAM. 
Algorithm \ref{alg:FastAdv} (in Appendix \ref{psedocode Layersam}) provides the pseudo-code for the full LayerSAM algorithm. Moreover, to combine the advantages of both LookSAM and LayerSAM in large batch training, we further propose Look Layer-wise SAM (Look-LayerSAM) algorithm. 
The pseudo-code is given in Algorithm \ref{alg:FastAdv2}. 
Empirically, we show that Look-LayerSAM significantly outperforms LookSAM in large-batch training, as will be demonstrated in Section \ref{section_exp}.




\section{Experimental Results}
\label{section_exp}
\begin{table*}[ht]
\caption{Top-1 accuracy and training time in per epoch (accuracy/time) of ViTs trained from scratch on ImageNet-1k. We use warmup scheme coupled with a cosine scaling rule for 300 epochs. Following the original setting of ViT, we set batch size as 4,096. }
\label{exp_vit_adamw}
\begin{center}
\renewcommand\arraystretch{1.2}
\begin{tabular}{c|c|cc|cc|c}
\toprule
\multicolumn{1}{c|}{\bf Model}
&\multicolumn{1}{c|}{\bf AdamW} 
&\multicolumn{1}{c}{\bf SAM-5}
&\multicolumn{1}{c|}{\bf LookSAM-5} 
&\multicolumn{1}{c}{\bf SAM-10}
&\multicolumn{1}{c|}{\bf LookSAM-10}
&\multicolumn{1}{c}{\bf SAM} \\ 
\hline 
\multicolumn{1}{c|}{\bf ViT-B-16}       & 74.7/59.7s   & 75.7/68.6s  & \textbf{79.8}/70.5s &75.1/63.7s   & \textbf{78.7}/67.1s & 79.8/103.1s \\
\multicolumn{1}{c|}{\bf ViT-B-32}       & 68.7/21.8s   & 69.8/24.7s  &\textbf{72.6}/26.3s & 69.0/23.4s & \textbf{71.5}/24.4s & 72.8/38.5s \\
\multicolumn{1}{c|}{\bf ViT-S-16}       & 74.9/24.1s    & 75.5/28.3s     &\textbf{77.6}/30.1s &74.9/25.4s &\textbf{77.1}/27.6s & 77.6/44.9s \\
\multicolumn{1}{c|}{\bf ViT-S-32}       &  68.1/18.2s    & 68.7/18.5s   &\textbf{68.8}/19.8s & 68.1/18.5s &\textbf{68.7}/19.5s & 68.9/25.7s \\
\bottomrule
\end{tabular}
\end{center}
\end{table*}
In this section, we evaluate the performance of our proposed LookSAM, LayerSAM, and Look-LayerSAM. First, we empirically illustrate that LookSAM can obtain similar accuracy to vanilla SAM while accelerating the training process. Next, we show that LayerSAM has better generalization for large-batch training on ImageNet-1k compared with vanilla SAM. In addition, we observe Look-LayerSAM can not only scale up to a larger batch size but also significantly speed up the training.
As Vision Transformer (ViT) training has become one of the most important applications of SAM~\cite{chen2021vision}, our experiments will mainly focus on ViT training, while 
we also include some experiments of ResNet and WideResNet on CIFAR100 to further evaluate the generality of the proposed methods. 
\subsection{Setup}

\textbf{Datasets.} To evaluate the efficiency of Look-SAM, we conduct the experiments on  CIFAR-100 \cite{krizhevsky2009learning} and ImageNet-1k \cite{deng2009imagenet} datasets. In addition, ImageNet training is the current benchmark for evaluating the performance of large-batch training \cite{mattson2019mlperf}. In this paper, we also use ImageNet-1k  to train the ViT models. 

\textbf{Models.} \textcolor{black}{We firstly use ResNet-18, ResNet-50 \cite{He_2016_CVPR} and WideResNet \cite{zagoruyko2016wide} to evaluate the performance of Look-SAM on CIFAR-100. To explore the scalability of Look-SAM, we use ViT \cite{dosovitskiy2020image} models to train ImageNet-1k based on the proposed LookSAM optimizer. Finally, we test the performance of our proposed Look-LayerSAM for large-batch training. More specially, we select the ViT models with various sizes to scale up the batch size, such as ViT-Base and ViT-Small for 300 epochs.}

\textbf{Baselines.} Our main baseline is SAM \cite{foret2020sharpness}. To better assess the performance of LookSAM, we propose the algorithm SAM-k as the baseline for comparison. More specially, SAM-k can be seen as the method that directly uses SAM every $k$ step.  

\textbf{Implementation Details.} We implement our algorithm in JAX \cite{jax2018github} and follow the original setting from SAM \cite{foret2020sharpness}. 
To compare the performance of LookSAM with vanilla SAM, we adopt AdamW \cite{loshchilov2017decoupled} as the base optimizer. Note that the input resolution is 224, which is the official setting for ViT. To scale up the batch size, we use LAMB \cite{you2019large} as our base optimizer for large-batch training and compare our approaches with SAM. We apply learning rate warmup scheme \cite{goyal2017accurate} to avoid the divergence due to the large learning rate, where training starts with a smaller learning rate $\eta$ and gradually increases to the large learning rate $\eta$ for 300 epochs.
In addition, to further improve the performance of large-batch training, we use RandAug \cite{cubuk2020randaugment} and Mixup \cite{zhang2017mixup} to scale the batch size to 64k. The implementation details can be found in Appendix \ref{parameter_setting}.

\subsection{CIFAR Training on ResNet and WideResNet}

In this section, we conduct  experiments for training ResNet and WideResNet on CIFAR-100 to evaluate the performance of our proposed algorithms. The experimental results are shown in 
Table \ref{tab_cifar100}. We can find that LookSAM-k can achieve a similar accuracy compared with SAM which is much better than SAM-k. As shown in Table \ref{tab_cifar100}, LookSAM-5 surprisingly achieve the same accuracy as SAM did (80.7\%, 83.3\%, 84.4\%) but with much less training time based on the performance of all the three models, ResNet-18, ResNet-50 and WRN-28-10. Additionally, LookSAM-k shows a remarkable improvement over the performance on SAM-k that likewise takes the comparable training time. Specifically, LookSAM-5 can obtain noticeably higher accuracy (80.7\%, 83.3\%, 84.4\%) compared with SAM-5 (80.4\%, 82.5\%, 83.8\%) on ResNet-18, ResNet-50 and WRN-28-10 respectively. Although the performance of LookSAM-k would degrade as we increase the value k, LookSAM-k remains dominating over SAM-k. For instance, according to the experiment on WRN-28-10, the improvement of LookSAM-k over SAM-k is 
desirable, with an increment of 0.6\%, 1.0\% and 0.7\% for $k=5,10,20$.

The empirical results in Table \ref{tab_cifar100} also demonstrate that 
the performance gap between LookSAM and SAM enlarges 
as model size increases. For example, we can observe an obvious increment of the average improvement of LookSAM-k over SAM-k's when comparing the experiments of ResNet-18 with those of ResNet-50 and WRN-28-10, from 0.37\% to 0.53\% and 0.77\%. Therefore, to further evaluate the performance and scalability of LookSAM, we present the experiment of ImageNet training from scratch on ViT with LookSAM in Section \ref{Vit LookSAM}.


\begin{table*}[t]
\renewcommand\arraystretch{1.2}
\caption{Accuracy of ViT-B-16 on ImageNet-1k for 300 epoch when using RandAug and Mixup. Look-LayerSAM can obtain above 75\% accuracy when we scale up the batch size to 64k.}
\label{exp_large_64k}
\begin{center}
\begin{tabular}{lcccccc}
\toprule
\multicolumn{1}{c}{\bf Model} 
& \multicolumn{1}{c}{\bf Algorithm}
&\multicolumn{1}{c}{\bf RandAug}
&\multicolumn{1}{c}{\bf Mixup}
&\multicolumn{1}{c}{\bf Optimizer}
&\multicolumn{1}{c}{\bf 32k}
&\multicolumn{1}{c}{\bf 64k}
\\ \hline
\multicolumn{1}{l}{\bf ViT-B-16} &\multicolumn{1}{l}{\bf Vanilla ViT}        & & & LAMB & 72.4 & 68.1 \\
\multicolumn{1}{l}{\bf ViT-B-16} &\multicolumn{1}{l}{\bf Look-LayerSAM}   & &     & LAMB & 77.1 & 72.0 \\
\multicolumn{1}{l}{\bf ViT-B-16} &\multicolumn{1}{l}{\bf Look-LayerSAM} & \checkmark &  & LAMB & 79.2  & 74.9 \\
\multicolumn{1}{l}{\bf ViT-B-16} &\multicolumn{1}{l}{\bf Look-LayerSAM} & \checkmark & \checkmark & LAMB & \textbf{79.7}  & \textbf{75.6} \\ 
\bottomrule
\end{tabular}
\end{center}
\end{table*}

\subsection{ImageNet Training from Scratch on Vision Transformer}
\label{Vit LookSAM}
Following the original setting of ViT, we train ViT with LookSAM and compare it with vanilla ViT and SAM-$k$. The experimental results are given in Table \ref{exp_vit_adamw}. 
It shows that LookSAM achieves similar accuracy with vanilla SAM and obtains much better performance than SAM-$k$. 
Specifically, compared with the minimal improvement of SAM-k over vanilla AdamW, LookSAM yields considerable improvements, such as the top-1 accuracy improvement from 74.7\% to 79.8\% on LookSAM-5 ($\uparrow$ 5.1\%), while SAM-5 can only achieve 75.7\%. There is a remarkable improvement ($\uparrow$ 4.1\%) of LookSAM-5 in test accuracy (79.8\%) in comparison to SAM-5 (75.7\%). 
Further, by computing SAM's update only periodically, our methods significantly improve the time cost over SAM while keeping similar predictive performance. 
For instance, LookSAM-5 enables a competitive reduction of training time by 2/3 for ViT-B-16 (from 103.1s to 68.6s) without any loss in test accuracy (79.8\%). Moreover, this advantage is widely reflected in different settings (shown in Table \ref{exp_vit_adamw}) and thereby our proposed methods can be adopted in a variety of ViT models. 

\subsection{Large-Batch Training for Vision Transformer}
In addition to standard training tasks, we further apply the proposed methods to the challenging large-batch distributed training. It has been observed that large-batch training usually converges to sharp local minima with degraded generalization performance~\cite{keskar2016large,goyal2017accurate}. This is due to insufficient noise in gradient estimation and the decreased number of updates. Therefore, scaling an algorithm to large-batch training is a challenging task.  

As mentioned in Section 3.3, we extended LookSAM to Look-LayerSAM to overcome the training instability problem in large-batch training. 
To evaluate the performance of our proposed algorithms for large-batch training, we use Look-LayerSAM to scale the batch size for ViT training on ImageNet-1k. As shown in Table \ref{exp_vit_large}, based on Look-LayerSAM, we can scale the batch size from 4,096 to 32,768 while keeping the accuracy above 77\%. Note that although vanilla SAM can improve the performance of ViT while scaling up, the improvement is weakened as batch size increases. For instance, the improvements are 4\%, 4\%, 3.2\%, 2.7\% from batch size 4,096 to 32,768 over LAMB (which is a standard optimizer for large batch training).  In contrast, our proposed Look-LayerSAM can consistently achieve a higher improvement even if scaling up the batch size to 32,768. 
\begin{table}[h]
\caption{Large-batch training accuracy of ViT-B-16 on ImageNet-1k. We use warmup scheme coupled with linear rule to scale the learning rate for 300 epochs. Look-LayerSAM achieves consistent higher accuracy than SAM from 4k to 32k. }
\label{exp_vit_large}
\renewcommand\arraystretch{1.2}
\begin{center}
\begin{tabular}{lccccc}
\toprule
\multicolumn{1}{c}{\bf Algorithm} &\multicolumn{1}{c}{\bf 4k} &\multicolumn{1}{c}{\bf 8k}   &\multicolumn{1}{c}{\bf 16k}   &\multicolumn{1}{c}{\bf 32k}
\\ \hline
\multicolumn{1}{l}{\bf LAMB}              &74.6    &74.3  & 74.4 & 72.4\\
\multicolumn{1}{l}{\bf LAMB + SAM}        &78.6    &78.3    &77.6   &75.1 \\
\multicolumn{1}{l}{\bf LAMB + Look-SAM}    &78.9   &78.4  &77.1 &75.3 \\
\multicolumn{1}{l}{\bf LAMB + Look-LayerSAM}  &\multicolumn{1}{c}{\bf 80.3} & \multicolumn{1}{l}{\bf 79.5}   & \multicolumn{1}{l}{\bf 78.4}  & \multicolumn{1}{l}{\bf 77.1}  
\\ \bottomrule
\end{tabular}
\end{center}
\end{table}
In particular, the increments on accuracy are stable from 4,096 to 32,768: 5.6\%, 5.8\%, 4.4\%, and 5.5\% over the LAMB optimizer. Moreover, LookSAM is able to achieve the performance on par with the vanilla SAM, while enjoying similar computational cost as LAMB. For example, top-1 accuracy of SAM and LookSAM are 78.6\% and 78.9\%, respectively, when batch size is 4,096. We continue to observe that Look-LayerSAM offers much more considerable benefits on large batch training, including 80.3\% accuracy on 4,096, as well as 77.1\% on batch size 32,768, in which SAM and LookSAM achieve 75.1\% and 75.3\%.

In addition, related work has shown that data augmentation can improve the performance of large-batch training. Therefore, we try to further scale the batch size to 64k based on RandAug and Mixup. The experimental results are shown in Table \ref{exp_large_64k}, which illustrates that our proposed Layer-LookSAM can work together with data augmentation and improve the performance of large-batch training. For instance, Look-LayerSAM can also achieve 74.9\% when applying RandAug and Mixup at 64k. After using Mixup, the accuracy improves to 75.6\%.  

To further evaluate the performance of LookSAM on accelerating the training of SAM, we analyze their training time when scaling batch size from 4,096 to 32,768. Note that we use 128, 256, 512 and 1024 TPU-v3 chips to report the speed of ViT-B-16 on batch size 4,096, 8,192, 16,384, and 32,768. Besides, we use warmup schedule coupled with linear learning rate decay for 300 epochs.
The experimental results are shown in Table \ref{exp_time}, which illustrates that LayerSAM will cause about $1.7\times$ training time compared with vanilla LAMB. However, Look-LayerSAM can significantly reduce the training time and achieve $1.5\times$ speed compared with LayerSAM when $k=5$. In particular, the training time of ViT-B-16 on ImageNet-1k can be reduced to 0.7 hour.
\begin{table}[ht]
\renewcommand\arraystretch{1.2}
\caption{Training Time of ViT-B-16 on ImageNet-1k. We set LAMB as the base optimizer and 300 as the training epoch. We can finish the ViT training within 1 hour.}
\label{exp_time}
\begin{center}
\begin{tabular}{lcccc}
\toprule
\multicolumn{1}{c}{\bf Algorithm} &\multicolumn{1}{c}{\bf 4k} &\multicolumn{1}{c}{\bf 8k}   &\multicolumn{1}{c}{\bf 16k}   &\multicolumn{1}{c}{\bf 32k}
\\ \hline 
\multicolumn{1}{l}{\bf LAMB }             & \textbf{4.8h}    & \textbf{2.4h}  & \textbf{1.2h}  & /   \\
\multicolumn{1}{l}{\bf LAMB + LayerSAM}  & 8.4h  & 4.3h   & 2.2h   & 1.1h   \\
\multicolumn{1}{l}{\bf LAMB + Look-LayerSAM}  & 5.6h & 2.8h    & 1.4h    & \textbf{0.7h}  
\\ \bottomrule
\end{tabular}
\end{center}
\end{table}

To sum up, with Look-LayerSAM, we are able to train Vision Transformer in 0.7 hour and achieve 77.1\% top-1 accuracy on ImageNet-1k with 32K batch size, outperforming existing optimizers such as LAMB and SAM.  

\begin{figure*}[tbp]
\includegraphics[width=5.5cm,height=4cm]{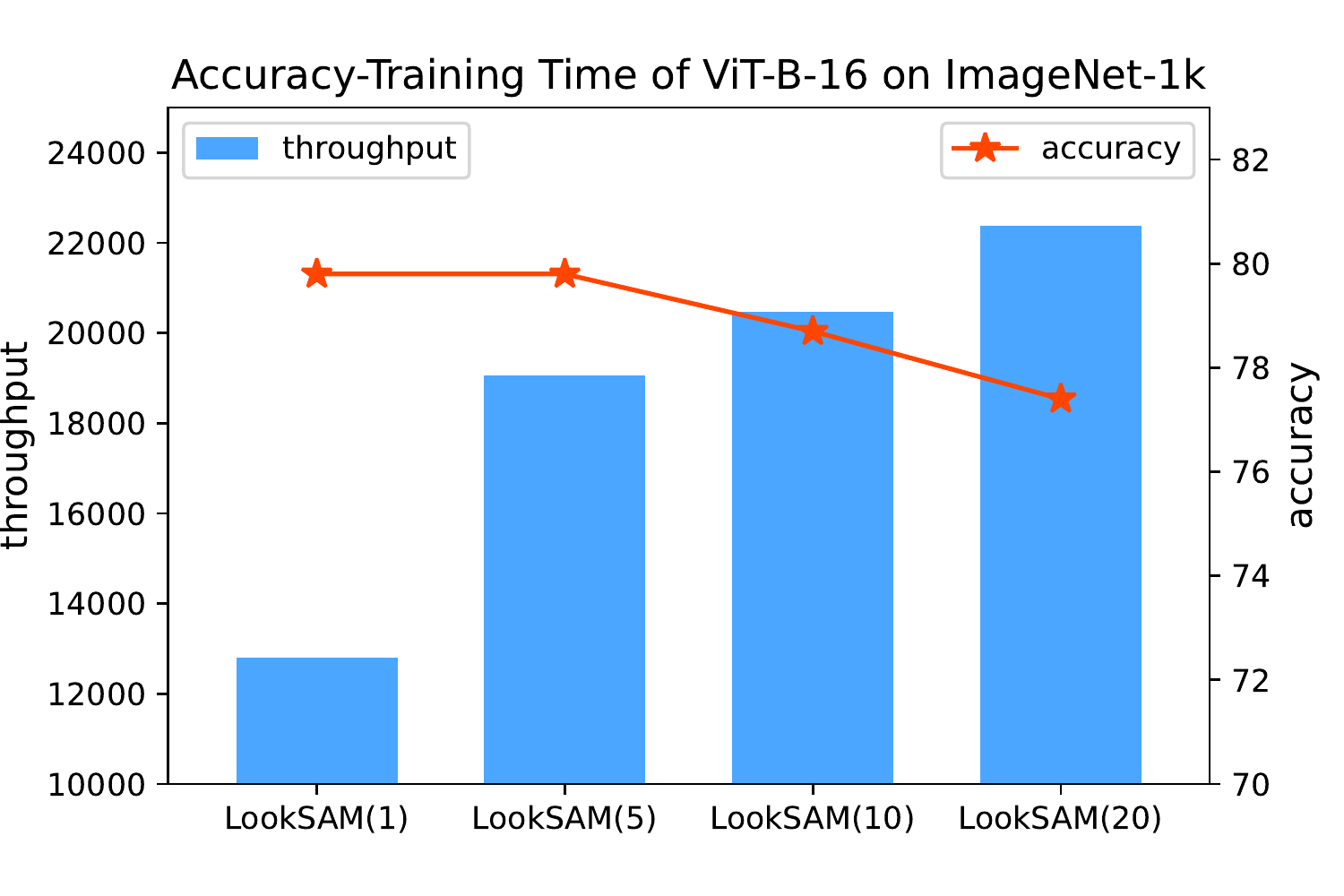}
\hspace{1ex} 
\includegraphics[width=5.5cm,height=4cm]{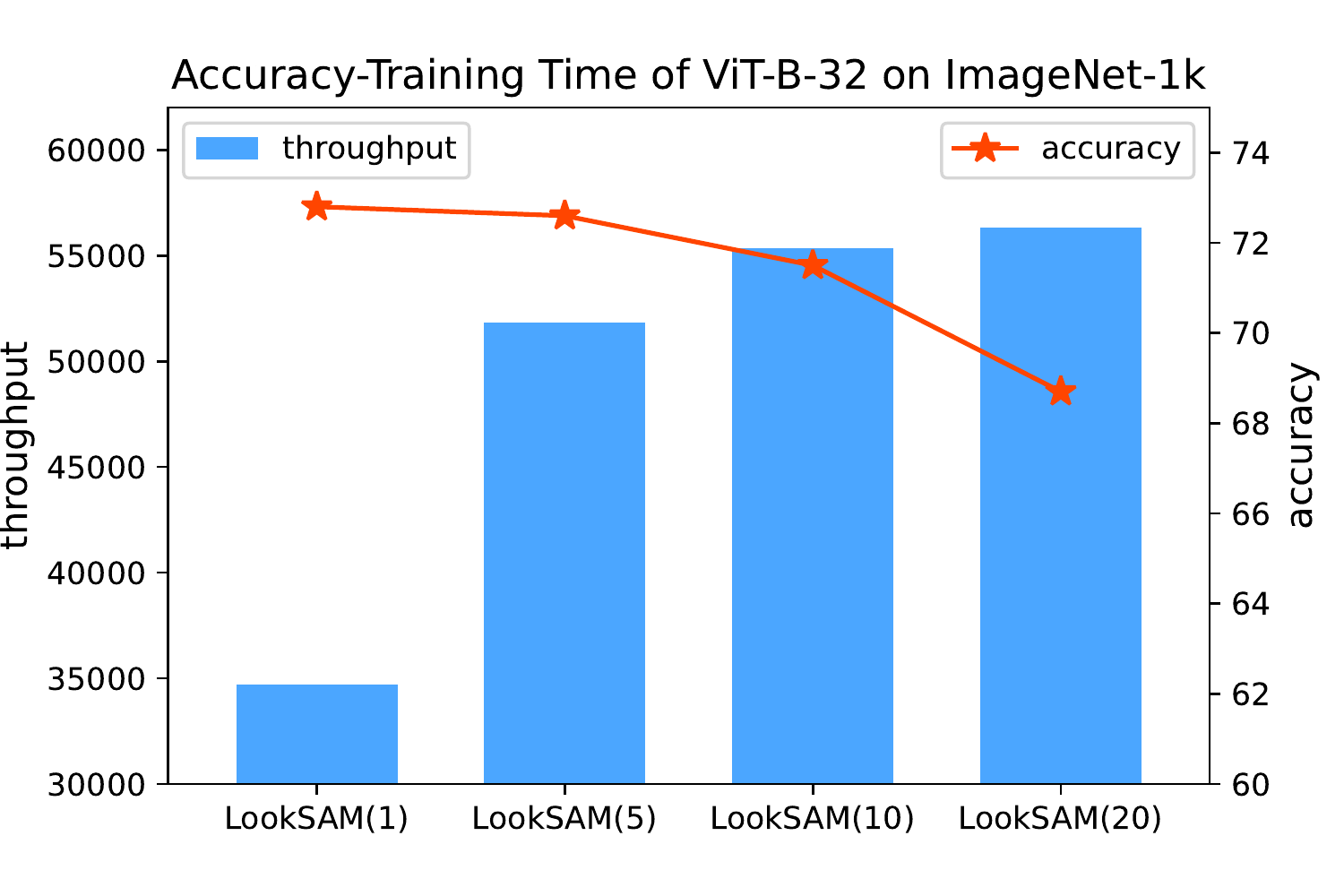}
\hspace{1ex}
\includegraphics[width=5.5cm,height=4cm]{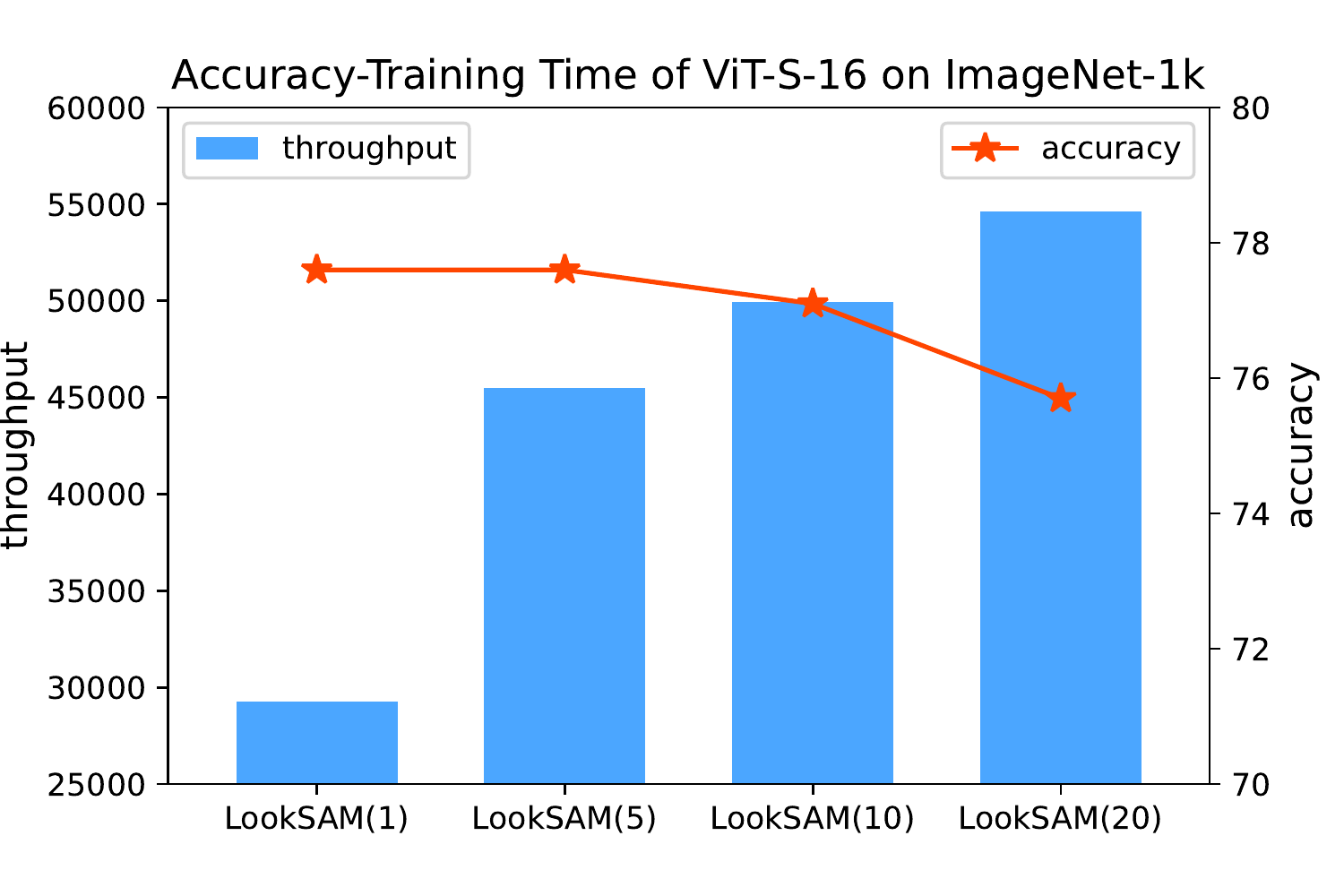}
\caption{Accuracy-Training Time of different models for LookSAM-$k$ on ImageNet-1k. With the growth of $k$ value, the throughput is increasing but the accuracy starts to drop. There is a trade-off between the accuracy and training speed. Note that LookSAM-1 is the same as the original SAM.}
\label{fig:my_label}
\end{figure*}

\subsection{Accuracy and Efficiency Tradeoff}

The reuse frequency $k$ controls the trade-off between accuracy and  speed. In this section, we try to conduct an analysis on the performance of LookSAM with different values of $k$ to further explore the trade-off. The experimental results in Figure \ref{fig:my_label} indicates  that LookSAM can achieve the similar accuracy as vanilla SAM when $k \leq 5$. With reuse frequency $k$ getting larger, the accuracy begins to drop while the training speed is 
accelerated. For example, as shown in Figure \ref{fig:my_label}, the accuracy of LookSAM-5 on ViT-B-16 is 79.8\%, which is the same as the original SAM. In the meantime, the throughput increases from 12,800 (SAM) to 19,051 (LookSAM-5). In addition, when the value $k$ increases to 10, the accuracy drops to 78.7\% (still improves by 4\% compared with AdamW) but the throughput increases to 20,480.

When $k$ is larger than 10, we notice that the speed is converged (almost identical to plain AdamW). 
Therefore, in practice, we can determine the $k$ value based on the desired trade-off, and we recommend $k=5$ for general applications since it will significantly improve the efficiency while still achieving almost equivalent test accuracy as SAM. 
In addition, our proposed LookSAM also provides more selections for deep learning researchers. If the application scenario requires a higher training speed, we can try increasing the frequency $k$. Otherwise, the frequency $k$ can be reduced. 

\subsection{Sensitivity Analysis about Hyper-Parameters}

\subsubsection{Sensitivity Analysis of $\alpha$}
\label{alpha analysis}
We study the effect of gradient reuse weight $\alpha$ has on the performance of training ImageNet-1k. 
We conduct this experiment with batch size 16,384 and 32,768 since large-batch training is usually more sensitive to hyperparameters. The experiments are conducted on ViT-B-16 using Look-LayerSAM, with LAMB as optimizer, and we set $\rho$ as 1.0. We report the validation accuracy for different $\alpha$ (0.5, 0.7, 1.0) in Table \ref{tab_alpha}. When $\alpha=0.7$, Look-LayerSAM achieves the best accuracy 78.4\% on batch size 16,384 and 77.1\% on batch size 32,768. Further, even if $\alpha$ is not well-tuned, Look-LayerSAM is able to obtain a good performance, including above $77\%$ accuracy on 16,384 batch size and $\sim 76\%$ accuracy on 32,768 batch size. 
\begin{table}[h]
\renewcommand\arraystretch{1.2}
\caption{Sensitivity Analysis of $\alpha$. We select ViT-B-16 as our base model and the optimizer is Look-LayerSAM (based on LAMB). } 
\label{tab_alpha}
\begin{center}
\vspace{-10pt}
\begin{tabular}{lccc}
\toprule
\multicolumn{1}{l}{\bf Batch Size}
&\multicolumn{1}{c}{\bf $\alpha$ = 0.5}
&\multicolumn{1}{c}{\bf $\alpha$ = 0.7}
&\multicolumn{1}{c}{\bf $\alpha$ = 1.0} 
\\ \hline 
\multicolumn{1}{l}{\bf 16384}  & 77.7   & \textbf{78.4}  & 78.2  \\
\multicolumn{1}{l}{\bf 32768}  &76.5   & \textbf{77.1}  & 75.9   
\\ \bottomrule
\end{tabular}
\end{center}
\end{table}

\subsubsection{Sensitivity Analysis of $\rho$}

Finally, we conduct a sensitivity analysis for different values of $\rho$, the intensity of perturbation in SAM and LookSAM. We evaluate the accuracy of ViT-B-16 on batch size 16,384 and 32,768. We set $\alpha=0.7$, the best value in our analysis from Section \ref{alpha analysis}. The experimental results regarding $\rho$ (0.5, 0.8, 1.0, 1.2) are shown in Table \ref{rho-table}. We report when $\rho=1.0$, Look-LayerSAM achieves the highest accuracy on both batch size 16,384 (78.4\%) and 32,768 (77.1\%). Additionally, we observe the
overall robustness from the analysis of $\rho$, which gives us $77\%$ accuracy on 16,384 batch size and more than $75\%$ accuracy on 32,768 batch size without finetuning. 
\begin{table}[h]
\renewcommand\arraystretch{1.2}
\caption{Sensitivity Analysis of $\rho$. We select ViT-B-16 as our base model and the optimizer is Look-LayerSAM (based on LAMB). } 
\label{rho-table}
\begin{center}
\vspace{-15pt}
\begin{tabular}{lcccc}
\toprule
\multicolumn{1}{l}{\bf Batch Size}
&\multicolumn{1}{c}{\bf $\rho$ = 0.5}
&\multicolumn{1}{c}{\bf $\rho$ = 0.8}
&\multicolumn{1}{c}{\bf $\rho$ = 1.0} 
&\multicolumn{1}{c}{\bf $\rho$ = 1.2}
\\ \hline
\multicolumn{1}{l}{\bf 16384}   & 77.0  & 77.8   & \textbf{78.4}  & 77.9 \\
\multicolumn{1}{l}{\bf 32768}   & 75.2  & 76.4  & \textbf{77.1}  & 76.7 \\
 \bottomrule
\end{tabular}
\vspace{-10pt}
\end{center}
\end{table}


\section{Conclusions}
\label{section 6}
We propose a novel algorithm LookSAM to reduce the additional computation from SAM and speed up the training process.  To further evaluate the performance in large-batch training, we propose Look-LayerSAM, which uses a layer-wise schedule to scale the weight perturbation of LookSAM. Finally, we evaluate our proposed algorithms on Vision Transformer. Experimental results illustrate that we can scale the batch size to 64k and obtain an accuracy above 75\%. Further, we can achieve about 8$\times$ speedup over the training settings in~\cite{dosovitskiy2020image} with a 4k batch size and finish the ViT training in 0.7 hour. To the best of our knowledge, this is a new speed record for ViT training. 

\section{Acknowledgements}

We thank Google TFRC for supporting us to get access to the Cloud TPUs. We thank CSCS (Swiss National Supercomputing Centre) for supporting us to get access to the Piz Daint supercomputer. We thank TACC (Texas Advanced Computing Center) for supporting us to get access to the Longhorn supercomputer and the Frontera supercomputer. We thank LuxProvide (Luxembourg national supercomputer HPC organization) for supporting us to get access to the MeluXina supercomputer.







{
    \small
    \bibliographystyle{plainnat}
    \bibliography{macros,main}
}

\clearpage

\appendix

\setcounter{page}{1}

\twocolumn[
\centering
\Large
\textbf{Towards Efficient and Scalable Sharpness-Aware Minimization} \\
\vspace{0.5em}Supplementary Material \\
\vspace{1.0em}
] 
\appendix




\section{Appendix}

\subsection{Theoretical Analysis of Projected Gradient}
\label{deriviation}
For SAM loss function $\mathcal{L}_{S}(\boldsymbol{w}+\boldsymbol{\hat{\epsilon}})$, based on Taylor Expansion, we can obtain:

\begin{equation}
    \mathcal{L}_{S}(\boldsymbol{w}+\boldsymbol{\hat{\epsilon}}) 
    \approx \mathcal{L}_{S}(\boldsymbol{w}) + \boldsymbol{\hat{\epsilon}} \nabla_{\boldsymbol{w}}\mathcal{L}_{S}(\boldsymbol{w})
\end{equation}

Therefore, we can rewrite Equation \ref{sam_calculate} as follows:

\begin{equation}
\label{equation looksam}
\begin{aligned}
    \nabla_{\boldsymbol{w}}\mathcal{L}_{S}(\boldsymbol{w})|_{\boldsymbol{w}+\boldsymbol{\hat{\epsilon}}} &= \nabla_{\boldsymbol{w}}\mathcal{L}_{S}(\boldsymbol{w}+\boldsymbol{\hat{\epsilon}}) \\
    &\approx \nabla_{\boldsymbol{w}} 
    [\mathcal{L}_{S}(\boldsymbol{w}) + \boldsymbol{\hat{\epsilon}}\nabla_{\boldsymbol{w}}\mathcal{L}_{S}(\boldsymbol{w})] \\
    &= \nabla_{\boldsymbol{w}}\mathcal{L}_{S}(\boldsymbol{w}) + \boldsymbol{\hat{\epsilon}}\nabla_{\boldsymbol{w}}^{2}\mathcal{L}_{S}(\boldsymbol{w})
\end{aligned}
\end{equation}

\label{lagrange statement}
In this section, we will analyse the distance of $\boldsymbol{g_v}$ within several steps $k$, which is given by,
\begin{equation}
    \boldsymbol{g_v} = \boldsymbol{\hat{\epsilon}}\nabla_{\boldsymbol{w}}^{2}\mathcal{L}_{S}(\boldsymbol{w}) - \lambda_0\bigtriangledown_{\boldsymbol{w}}\mathcal{L_S}(\boldsymbol{w})
    \label{g_v eq}
\end{equation}\\
Note that $\boldsymbol{g_v}$ is vertical to the SGD gradient on original weight $\boldsymbol{w}$, which gives us the following constraint:
\begin{equation}
    (\boldsymbol{\hat{\epsilon}}\nabla_{\boldsymbol{w}}^{2}\mathcal{L}_{S}(\boldsymbol{w}) - \lambda_0\bigtriangledown_{\boldsymbol{w}}\mathcal{L_S}(\boldsymbol{w}))\bigtriangledown_{\boldsymbol{w}}\mathcal{L_S}(\boldsymbol{w})=0
\end{equation}
\\
\begin{table*}[ht]

\caption{Architectures of ViTs}
\label{tab_architecture}
\begin{center}
\begin{tabular}{lcccccc}
\multicolumn{1}{c}{\bf Model}
&\multicolumn{1}{c}{\bf Params} &\multicolumn{1}{c}{\bf Patch Resolution} 
&\multicolumn{1}{c}{\bf Sequence Length}
&\multicolumn{1}{c}{\bf Hidden Size} 
&\multicolumn{1}{c}{\bf Heads}
&\multicolumn{1}{c}{\bf Layers}
\\ \hline \\
ViT-B-16       & 87M    & $16 \times 16$   & 196  & 768 & 12 & 12    \\
ViT-B-32       & 88M    & $32 \times 32$  & 49 & 768 & 12 & 12   \\
ViT-S-16       & 22M     & $16 \times 16$     & 196 & 384 & 6 & 12  \\
ViT-S-32       & 23M      & $32 \times 32$    & 49 & 384 & 6 & 12  \\
\\ \hline
\end{tabular}
\end{center}
\end{table*}

We can use Lagrange function to describe and solve this problem. Let's introduce a new variable $\lambda$ and define the Lagrangian $L$ with all the parameters $\boldsymbol{w}$, $\lambda_0$ and $\lambda$ as variable as follows,
\begin{equation}
\begin{aligned}
     L(\boldsymbol{w}, \lambda_0, \lambda) &= \boldsymbol{\hat{\epsilon}} \nabla_{\boldsymbol{w}}^{2}\mathcal{L}_{S}(\boldsymbol{w}) - \lambda_0\bigtriangledown_{\boldsymbol{w}}\mathcal{L_S}(\boldsymbol{w})\\
     &+\lambda(\boldsymbol{\hat{\epsilon}}\nabla_{\boldsymbol{w}}^{2}\mathcal{L}_{S}(\boldsymbol{w})\\
     &- \lambda_0\bigtriangledown_{\boldsymbol{w}}\mathcal{L_S}(\boldsymbol{w}))\bigtriangledown_{\boldsymbol{w}}\mathcal{L_S}(\boldsymbol{w})
\end{aligned}
    \label{lagrange function}
\end{equation}
The partial derivative of $L$ with respect to $\boldsymbol{w}$, $\lambda_0$ and $\lambda$ are as follows,\\
\begin{equation}
\begin{aligned}
    L_{\boldsymbol{w}}(\boldsymbol{w},\lambda_0,\lambda)&=
    \frac{\partial\boldsymbol{\hat{\epsilon}}}{\partial{\boldsymbol{w}}}\bigtriangledown_{\boldsymbol{w}}^2\mathcal{L_S}(\boldsymbol{w})+\boldsymbol{\hat{\epsilon}}\bigtriangledown_{\boldsymbol{w}}^3\mathcal{L_S}(\boldsymbol{w})\\
    &-\lambda_0\bigtriangledown_{\boldsymbol{w}}^2\mathcal{L_S}(\boldsymbol{w})
    +\lambda(\frac{\partial\boldsymbol{\hat{\epsilon}}}{\partial{\boldsymbol{w}}}\bigtriangledown_{\boldsymbol{w}}^2\mathcal{L_S}(\boldsymbol{w})\\
    &+\boldsymbol{\hat{\epsilon}}\bigtriangledown_{\boldsymbol{w}}^3\mathcal{L_S}(\boldsymbol{w})\\
    &-\lambda_0\bigtriangledown_{\boldsymbol{w}}^2\mathcal{L_S}(\boldsymbol{w}) )\bigtriangledown_{\boldsymbol{w}}\mathcal{L_S}(\boldsymbol{w})\\
    &+\lambda(\boldsymbol{\hat{\epsilon}}\bigtriangledown^2_{\boldsymbol{w}}\mathcal{L_S}(\boldsymbol{w})\\
    &-\lambda_0\bigtriangledown_{\boldsymbol{w}}\mathcal{L_S}(\boldsymbol{w}))\bigtriangledown^2_{\boldsymbol{w}}\mathcal{L_S}(\boldsymbol{w})
\end{aligned}
\label{lagrange_w}
\end{equation}
\\
\begin{equation}
    L_{\lambda_0}(\boldsymbol{w},\lambda_0,\lambda)=-\bigtriangledown_{\boldsymbol{w}}\mathcal{L_S}(\boldsymbol{w})
    \label{lagrange_lambda_0}
\end{equation}
\begin{equation}
    L_\lambda(\boldsymbol{w},\lambda_0,\lambda)=(\boldsymbol{\hat{\epsilon}}\nabla_{\boldsymbol{w}}^{2}\mathcal{L}_{S}(\boldsymbol{w})- \lambda_0\bigtriangledown_{\boldsymbol{w}}\mathcal{L_S}(\boldsymbol{w}))\bigtriangledown_{\boldsymbol{w}}\mathcal{L_S}(\boldsymbol{w})
    \label{lagrange_lambda}
\end{equation}
\\
We omit some high order terms that are trivial compared to the first-order terms in calculation and set all of the partial derivatives equal to 0. This gives,
\begin{equation}
\begin{aligned}
    & L_{\boldsymbol{w}}(\boldsymbol{w},\lambda_0,\lambda)\approx\frac{\partial\boldsymbol{\hat{\epsilon}}}{\partial{\boldsymbol{w}}}\bigtriangledown_{\boldsymbol{w}}^2\mathcal{L_S}(\boldsymbol{w})-\lambda_0\bigtriangledown_{\boldsymbol{w}}^2\mathcal{L_S}(\boldsymbol{w})=0 
    \\
    & L_{\lambda_0}(\boldsymbol{w},\lambda_0,\lambda)=-\bigtriangledown_{\boldsymbol{w}}\mathcal{L_S}(\boldsymbol{w}) = 0
    \\ 
    & L_\lambda(\boldsymbol{w},\lambda_0,\lambda)=(\boldsymbol{\hat{\epsilon}}\nabla_{\boldsymbol{w}}^{2}\mathcal{L}_{S}(\boldsymbol{w})- \lambda_0\bigtriangledown_{\boldsymbol{w}}\mathcal{L_S}(\boldsymbol{w}))\bigtriangledown_{\boldsymbol{w}}\mathcal{L_S}(\boldsymbol{w})=0
\end{aligned}
\label{lagrange_partial derivative}
\end{equation}
\\
\\
Since $\frac{\partial\boldsymbol{\hat{\epsilon}}}{\partial{\boldsymbol{w}}}=\rho(\frac{\bigtriangledown_{\boldsymbol{w}}^2\mathcal{L_S}(\boldsymbol{w})}{||\bigtriangledown_{\boldsymbol{w}}\mathcal{L_S}(\boldsymbol{w})||}
-\frac{\bigtriangledown_{\boldsymbol{w}}\mathcal{L_S}(\boldsymbol{w})\bigtriangledown_{\boldsymbol{w}}^2\mathcal{L_S}(\boldsymbol{w})\bigtriangledown_{\boldsymbol{w}}\mathcal{L_S}(\boldsymbol{w})}{||\bigtriangledown_{\boldsymbol{w}}\mathcal{L_S}(\boldsymbol{w})||^3})$ and from Equation \ref{lagrange_partial derivative} we have $\frac{\partial\boldsymbol{\hat{\epsilon}}}{\partial{\boldsymbol{w}}}=\lambda_0$, then\\
\begin{equation}
    \lambda_0 = \rho(\frac{\bigtriangledown_{\boldsymbol{w}}^2\mathcal{L_S}(\boldsymbol{w})}{||\bigtriangledown_{\boldsymbol{w}}\mathcal{L_S}(\boldsymbol{w})||}
    -\frac{\bigtriangledown_{\boldsymbol{w}}\mathcal{L_S}(\boldsymbol{w})\bigtriangledown_{\boldsymbol{w}}^2\mathcal{L_S}(\boldsymbol{w})\bigtriangledown_{\boldsymbol{w}}\mathcal{L_S}(\boldsymbol{w})}{||\bigtriangledown_{\boldsymbol{w}}\mathcal{L_S}(\boldsymbol{w})||^3})
    \label{Lagrange_lambda}
\end{equation}
\\
In addition, from Equation \ref{lagrange_partial derivative}, we have,
\\
\begin{equation}
    (\boldsymbol{\hat{\epsilon}}\nabla_{\boldsymbol{w}}^{2}\mathcal{L}_{S}(\boldsymbol{w}) - \lambda_0\bigtriangledown_{\boldsymbol{w}}\mathcal{L_S}(\boldsymbol{w}))\bigtriangledown_{\boldsymbol{w}}\mathcal{L_S}(\boldsymbol{w})=0
\end{equation}
\\
Then $\lambda_0$ can be written as:
\begin{equation}
    \lambda_0=\rho\frac{\bigtriangledown_{\boldsymbol{w}}\mathcal{L_S}(\boldsymbol{w})\bigtriangledown_{\boldsymbol{w}}^2\mathcal{L_S}(\boldsymbol{w})\bigtriangledown_{\boldsymbol{w}}\mathcal{L_S}(\boldsymbol{w})}{||\bigtriangledown_{\boldsymbol{w}}\mathcal{L_S}(\boldsymbol{w})||^3}
    \label{vertical_lambda_0}
\end{equation}
\\
\\
With Equation \ref{Lagrange_lambda} and \ref{vertical_lambda_0},\\
\begin{equation}
\begin{aligned}
    &\rho(\frac{\bigtriangledown_{\boldsymbol{w}}^2\mathcal{L_S}(\boldsymbol{w})}{||\bigtriangledown_{\boldsymbol{w}}\mathcal{L_S}(\boldsymbol{w})||}
    -\frac{\bigtriangledown_{\boldsymbol{w}}\mathcal{L_S}(\boldsymbol{w})\bigtriangledown_{\boldsymbol{w}}^2\mathcal{L_S}(\boldsymbol{w})\bigtriangledown_{\boldsymbol{w}}\mathcal{L_S}(\boldsymbol{w})}{||\bigtriangledown_{\boldsymbol{w}}\mathcal{L_S}(\boldsymbol{w})||^3})\\
    =&\rho\frac{\bigtriangledown_{\boldsymbol{w}}\mathcal{L_S}(\boldsymbol{w})\bigtriangledown_{\boldsymbol{w}}^2\mathcal{L_S}(\boldsymbol{w})\bigtriangledown_{\boldsymbol{w}}\mathcal{L_S}(\boldsymbol{w})}{||\bigtriangledown_{\boldsymbol{w}}\mathcal{L_S}(\boldsymbol{w})||^3}
    \end{aligned}
\end{equation}
Then we have
\begin{equation}
\frac{1}{2}\rho\frac{\bigtriangledown_{\boldsymbol{w}}^2\mathcal{L_S}(\boldsymbol{w})}{||\bigtriangledown_{\boldsymbol{w}}\mathcal{L_S}(\boldsymbol{w})||} =\rho\frac{\bigtriangledown_{\boldsymbol{w}}\mathcal{L_S}(\boldsymbol{w})\bigtriangledown_{\boldsymbol{w}}^2\mathcal{L_S}(\boldsymbol{w})\bigtriangledown_{\boldsymbol{w}}\mathcal{L_S}(\boldsymbol{w})}{||\bigtriangledown_{\boldsymbol{w}}\mathcal{L_S}(\boldsymbol{w})||^3}
\label{relationship_lambda_0}
\end{equation}
\\
Using the relationship in Equation \ref{relationship_lambda_0}, we can write $\lambda_0$ in this way:\\
\begin{equation}
    \lambda_0=\frac{1}{2}\rho\frac{\bigtriangledown_{\boldsymbol{w}}^2\mathcal{L_S}(\boldsymbol{w})}{||\bigtriangledown_{\boldsymbol{w}}\mathcal{L_S}(\boldsymbol{w})||}
\end{equation}\\
Substituting the value of $\lambda_0$ back to Equation \ref{lagrange function} gives us the maximum of the Lagrangian.\\
\begin{equation}
\begin{aligned}
    \tilde{L}(\boldsymbol{w},\lambda_0,\lambda)&= \boldsymbol{\hat{\epsilon}}\nabla_{\boldsymbol{w}}^{2}\mathcal{L}_{S}(\boldsymbol{w}) - \lambda_0\bigtriangledown_{\boldsymbol{w}}\mathcal{L_S}(\boldsymbol{w})\\
    &=\rho\frac{\bigtriangledown_{\boldsymbol{w}}\mathcal{L_S}(\boldsymbol{w})}{||\bigtriangledown_{\boldsymbol{w}}\mathcal{L_S}(\boldsymbol{w})||}\bigtriangledown_{\boldsymbol{w}}^2\mathcal{L_S}(\boldsymbol{w})\\
    &-\frac{1}{2}\rho\frac{\bigtriangledown_{\boldsymbol{w}}^2\mathcal{L_S}(\boldsymbol{w})}{||\bigtriangledown_{\boldsymbol{w}}\mathcal{L_S}(\boldsymbol{w})||}\bigtriangledown_{\boldsymbol{w}}\mathcal{L_S}(\boldsymbol{w})\\
\end{aligned}
\label{lagrange_max}
\end{equation}
Using this relationship in Equation \ref{lagrange_max} gives us:
\begin{equation}
    \boldsymbol{g_v}=\tilde{L}(\boldsymbol{w},\lambda_0,\lambda)=\frac{1}{2}\rho   \bigtriangledown_{\boldsymbol{w}}^2\mathcal{L_S}(\boldsymbol{w})\frac{\bigtriangledown_{\boldsymbol{w}}\mathcal{L_S}(\boldsymbol{w})}{||\bigtriangledown_{\boldsymbol{w}}\mathcal{L_S}(\boldsymbol{w})||}
    \label{g_v bound}
\end{equation}\\
We can use Equation \ref{g_v bound} to derive the distance of $\boldsymbol{g_v}$ within $k$ steps,\\
\begin{equation}
\begin{aligned}
||\boldsymbol{g_{v,t}} &- \boldsymbol{g_{v,t+k}}|| =||\tilde{L}_{t}(\boldsymbol{w_t},\lambda_0,\lambda) - \tilde{L}_{t+k}(\boldsymbol{w_{t+k}},\lambda_0,\lambda)||\\
&\approx ||\frac{1}{2}\rho \bigtriangledown_{\boldsymbol{w_t}}^{2}\mathcal{L_S}(\boldsymbol{w_t}) \frac{\bigtriangledown_{\boldsymbol{w_t}}\mathcal{L_S}(\boldsymbol{w_t})}{||\bigtriangledown_{\boldsymbol{w_t}}\mathcal{L_S}(\boldsymbol{w_t})||} \\
&- \frac{1}{2}\rho \bigtriangledown_{\boldsymbol{w_{t+k}}}^{2}\mathcal{L_S}(\boldsymbol{w_{t+k}}) \frac{\bigtriangledown_{\boldsymbol{w_{t+k}}}\mathcal{L_S}(\boldsymbol{w_{t+k}})}{||\bigtriangledown_{\boldsymbol{w_{t+k}}}\mathcal{L_S}(\boldsymbol{w_{t+k}})||}||\\
\end{aligned}
    \label{g_proj bound}
\end{equation}

\subsection{LayerSAM \& LookLayerSAM}
\label{psedocode Layersam}
\begin{algorithm}[H]
   \caption{Layer-wise SAM (LayerSAM)}
   \label{alg:FastAdv}
    \begin{algorithmic}
   \STATE {\bfseries Input:} $x \in \mathbb{R}^{d}$, learning rate $\eta_{t}$, update frequency $k$.
   \FOR{$t \leftarrow 1$ {\bfseries to} $T$}
   \STATE Sample Minibatch $\mathcal{B}=\{(x_i,y_i), \cdots, (x_{|\mathcal{B}|}, y_{|\mathcal{B}|})\}$ from $X$.
   \STATE Compute gradient $g = \nabla_{\boldsymbol{w}}L_{B}(\boldsymbol{w})$ on minibatch $\mathcal{B}$.
   \STATE Compute $\boldsymbol{\epsilon^{(i)}} = \rho \ \frac{||\boldsymbol{w}^{(i)}||}{||\boldsymbol{g}^{(i)}||} \cdot
   \nabla_{\boldsymbol{w}}\mathcal{L}_{S}(\boldsymbol{w}) /  \|\nabla_{\boldsymbol{w}}\mathcal{L}_{S}(\boldsymbol{w})\|$
   \STATE Compute gradient approximation for the SAM objective: $\boldsymbol{g_{s}} =  \nabla_{\boldsymbol{w}}L_{\mathcal{B}}(\boldsymbol{w})|_{\boldsymbol{w}+\boldsymbol{\epsilon}}$
   \STATE Update weights: $\boldsymbol{w_{t+1}^{(i)}} = \boldsymbol{w_{t}^{(i)}} - \eta_{t}^{(i)} \cdot \boldsymbol{g_{s}^{(i)}}$
   \ENDFOR
\end{algorithmic}
\end{algorithm}

\begin{algorithm}[h]
   \caption{Look-LayerSAM}
   \label{alg:FastAdv2}
    \begin{algorithmic}
   \STATE {\bfseries Input:} $x \in \mathbb{R}^{d}$, learning rate $\eta_{t}$, update frequency $k$.
   \FOR{$t \leftarrow 1$ {\bfseries to} $T$}
   \STATE Sample Minibatch $\mathcal{B}=\{(x_i,y_i), \cdots, (x_{|\mathcal{B}|}, y_{|\mathcal{B}|})\}$ from $X$.
   \STATE Compute gradient $\boldsymbol{g} = \nabla_{\boldsymbol{w}}L_{B}(\boldsymbol{w})$ on minibatch $\mathcal{B}$.
   \IF{t\%k = 0}
   \STATE  $\epsilon(\boldsymbol{w})^{(i)} = \rho \ \frac{\|\boldsymbol{w}^{(i)}\|}{\|\boldsymbol{g}^{(i)}\|} \cdot
   \nabla_{\boldsymbol{w}}\mathcal{L}_{S}(\boldsymbol{w}) /  \|\nabla_{\boldsymbol{w}}\mathcal{L}_{S}(\boldsymbol{w})\|$
   \STATE Compute     SAM gradient: $\boldsymbol{g_{s}} =  \nabla_{\boldsymbol{w}}L_{\mathcal{B}}(\boldsymbol{w})|_{\boldsymbol{w}+\epsilon(\boldsymbol{w})}$
   \STATE $\boldsymbol{g_v} =  \boldsymbol{g_s} - \|\boldsymbol{g_s}\|  \cos(\theta) \cdot \frac{\boldsymbol{g}}{\|\boldsymbol{g}\|}$, where $\cos(\theta) =\frac{\boldsymbol{g}\cdot\boldsymbol{g_s}}{\|\boldsymbol{g}\|\|\boldsymbol{g_s}\|} $
   \ELSE
   \STATE $\boldsymbol{g_{s}} = \boldsymbol{g} + \alpha \cdot \frac{\|\boldsymbol{g}\|}{\|\boldsymbol{g_{v}}\|} \cdot \boldsymbol{g_{v}}$
   \ENDIF
   \STATE Update weights: $\boldsymbol{w_{t+1}}^{(i)} = \boldsymbol{w_{t}}^{(i)} - \eta_{t}^{(i)} \cdot \boldsymbol{g_{s}}^{(i)}$
   \ENDFOR
\end{algorithmic}
\end{algorithm}



\subsection{Parameter Settings}
\label{parameter_setting}
\begin{table*}[h]\scriptsize
\caption{Parameter Settings of ViT for Vanilla Training}
\label{tab_param}
\begin{center}
\begin{tabular}{lccccccccccc}
\multicolumn{1}{c}{\bf Model}
&\multicolumn{1}{c}{\bf \makecell[c]{Input \\ Resolution}} &\multicolumn{1}{c}{\bf \makecell[c]{Batch \\ Size}} 
&\multicolumn{1}{c}{\bf Epoch}
&\multicolumn{1}{c}{\bf \makecell[c]{Warmup \\ Steps}} 
&\multicolumn{1}{c}{\bf \makecell[c]{Peak \\ LR}}
&\multicolumn{1}{c}{\bf \makecell[c]{LR \\ Decay}} 
&\multicolumn{1}{c}{\bf Optimizer}
&\multicolumn{1}{c}{\bf $\rho$} 
&\multicolumn{1}{c}{\bf \makecell[c]{Weight \\ Decay}} 
&\multicolumn{1}{c}{\bf \makecell[c]{Gradient \\ Clipping}}
\\ \hline \\
ViT-B-16       & 224    & 4096   & 300  & 10000 & 3e-3 & cosine & AdamW & / & 0.3 & 1.0    \\
ViT-B-32       & 224    & 4096  & 300 & 10000 & 3e-3 & cosine & AdamW & / & 0.3 & 1.0   \\
ViT-S-16       & 224     & 4096     & 300 & 10000 & 3e-3 & cosine & AdamW & / & 0.3 & 1.0  \\
ViT-S-32       & 224      & 4096    & 300 & 10000 & 3e-3 & cosine & AdamW & / & 0.3 & 1.0  \\
\\ \hline \\
ViT-B-16 + SAM       & 224    & 4096   & 300  & 10000 & 3e-3 & cosine & AdamW & 0.18 & 0.3 & 1.0    \\
ViT-B-32 + SAM       & 224    & 4096  & 300 & 10000 & 3e-3 & cosine & AdamW & 0.15 & 0.3 & 1.0   \\
ViT-S-16 + SAM       & 224     & 4096     & 300 & 10000 & 3e-3 & cosine & AdamW & 0.1 & 0.3 & 1.0  \\
ViT-S-32 + SAM       & 224      & 4096    & 300 & 10000 & 3e-3 & cosine & AdamW & 0.05 & 0.3 & 1.0  \\
\\ \hline \\
ViT-B-16 + LookSAM       & 224    & 4096   & 300  & 10000 & 3e-3 & cosine & AdamW & 0.18 & 0.3 & 1.0    \\
ViT-B-32 + LookSAM       & 224    & 4096  & 300 & 10000 & 3e-3 & cosine & AdamW & 0.15 & 0.3 & 1.0   \\
ViT-S-16 + LookSAM       & 224     & 4096     & 300 & 10000 & 3e-3 & cosine & AdamW & 0.1 & 0.3 & 1.0  \\
ViT-S-32 + LookSAM       & 224      & 4096    & 300 & 10000 & 3e-3 & cosine & AdamW & 0.05 & 0.3 & 1.0  \\
\\ \hline
\end{tabular}
\end{center}
\end{table*}
\begin{table*}[h]\scriptsize
\caption{Parameter Settings of ViT for Large-Batch Training}
\label{tab_param_2}
\begin{center}
\begin{tabular}{lcccccccccc}
\multicolumn{1}{c}{\bf Model}
 &\multicolumn{1}{c}{\bf \makecell[c]{Batch \\ Size}} 
&\multicolumn{1}{c}{\bf Epoch}
&\multicolumn{1}{c}{\bf \makecell[c]{Warmup \\ Steps}} 
&\multicolumn{1}{c}{\bf \makecell[c]{Peak \\ LR}}
&\multicolumn{1}{c}{\bf \makecell[c]{LR \\ Decay}} 
&\multicolumn{1}{c}{\bf Optimizer}
&\multicolumn{1}{c}{\bf $\rho$}
&\multicolumn{1}{c}{\bf $\alpha$} 
&\multicolumn{1}{c}{\bf \makecell[c]{Weight \\ Decay}} 
&\multicolumn{1}{c}{\bf \makecell[c]{Gradient \\ Clipping}}
\\ \hline \\
ViT-B-16 + SAM       & 4096   & 300  & 10000 & 1e-2 & linear & LAMB & 0.18 & / & 0.1 & 1.0    \\
ViT-B-16 + SAM       & 8192   & 300  & 10000 & 1.7e-2 & linear & LAMB & 0.18 & / & 0.1 & 1.0    \\
ViT-B-16 + SAM       & 16834   & 300  & 7000 & 1.8e-2 & linear & LAMB & 0.18 & / & 0.1 & 1.0    \\
ViT-B-16 + SAM       & 32768   & 300  & 6000 & 1.8e-2 & linear & LAMB & 0.18 & / & 0.1 & 1.0    \\

\\ \hline \\
ViT-B-16 + LayerSAM       & 4096   & 300  & 10000 & 1e-2 & linear & LAMB & 1.0 & / & 0.1 & 1.0    \\
ViT-B-16 + LayerSAM       & 8192  & 300 & 10000 & 1.7e-2 & linear & LAMB & 1.0 & / & 0.1 & 1.0   \\
ViT-B-16 + LayerSAM       & 16384     & 300 & 7000 & 1.8e-2 & linear & LAMB & 1.0 & / & 0.1 & 1.0  \\
ViT-B-16 + LayerSAM       & 32768    & 300 & 6000 & 1.8e-2 & linear & LAMB & 1.0 & / & 0.1 & 1.0  \\
ViT-B-16 + LayerSAM       & 65536    & 300 & 3500 & 2e-2 & linear & LAMB & 1.0 & / & 0.2 & 1.0  \\
\\ \hline \\
ViT-B-16 + Look-LayerSAM       & 4096   & 300  & 10000 & 1e-2 & linear & LAMB & 1.0 &0.7 & 0.1 & 1.0    \\
ViT-B-16 + Look-LayerSAM       & 8192  & 300 & 10000 & 1.7e-2 & linear & LAMB & 1.0 & 0.7 & 0.1 & 1.0   \\
ViT-B-16 + Look-LayerSAM       & 16384     & 300 & 7000 & 1.8e-2 & linear & LAMB & 1.0 &0.7 & 0.1 & 1.0  \\
ViT-B-16 + Look-LayerSAM       & 32768    & 300 & 6000 & 1.8e-2 & linear & LAMB & 1.0 &0.7 & 0.1 & 1.0  \\
ViT-B-16 + Look-LayerSAM       & 65536    & 300 & 3500 & 2e-2 & linear & LAMB & 1.0 &0.7 & 0.2 & 1.0  \\
\\ \hline
\end{tabular}
\end{center}
\end{table*}
In this section, we will introduce the architectures of ViTs in this paper (Table \ref{tab_architecture}). Next, we provide the hyperparameters in Table \ref{tab_param} for ViT training, including learning rate, warmup, optimizer, gradient clipping, epoch, etc. In addition, Table \ref{tab_param_2} gives us the parameter settings of ViT for large-batch training in this paper.

\subsection{Generalization bound}
\label{analyse_reuse}
We firstly introduce
\label{PAC appendix} Theorem
\ref{bound PAC} regarding generalization bound based on sharpness of LookSAM and then give a proof for it. Note that a similar bound was also established in the original SAM paper~\cite{foret2020sharpness}. 
\def\mathbi#1{\textbf{\em #1}} 
\begin{theorem}
\label{bound PAC}
With probability 1 - $\delta$ over the choice the training set $\mathcal{S} \sim \mathcal{D}$, we have\\
\begin{equation}
    \begin{aligned}
    & \mathcal{L}_{D}(\boldsymbol{w})\leq
    \max_{||\boldsymbol{\epsilon'}||_{p}\leq \rho'}
    \mathcal{L}_{S}(\boldsymbol{w}+\boldsymbol{\epsilon'})
    \\
    &+ \sqrt{\frac{k\log(1+\frac{||\boldsymbol{w}||_2^2}{\rho'^2}
    (1+\sqrt{\frac{\log(n)}{k}})^2)+4\log\frac{n}{\delta}+\tilde{O}(1)}{n-1}}
    \end{aligned}
\end{equation}

where n = $|\mathcal{S}|$ and $\rho'^2=\rho^2+\rho_0^2$.
\end{theorem}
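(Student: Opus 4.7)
The plan is to derive this PAC-Bayesian generalization bound by adapting the argument in the original SAM paper (Foret et al., 2020, Theorem 2) to account for the extra perturbation introduced by LookSAM's periodic gradient reuse. First, I would set up a McAllester-style PAC-Bayes inequality using an isotropic Gaussian posterior $Q=\mathcal{N}(\boldsymbol{w},\sigma^2 I_k)$ centered at the learned weights and a Gaussian prior $P=\mathcal{N}(0,\sigma_P^2 I_k)$, where $k$ denotes the number of parameters, yielding
\begin{equation*}
\mathbb{E}_{\boldsymbol{\epsilon}\sim Q}[\mathcal{L}_D(\boldsymbol{w}+\boldsymbol{\epsilon})] \le \mathbb{E}_{\boldsymbol{\epsilon}\sim Q}[\mathcal{L}_S(\boldsymbol{w}+\boldsymbol{\epsilon})] + \sqrt{\tfrac{\mathrm{KL}(Q\|P)+\log(n/\delta)+O(1)}{n-1}}.
\end{equation*}

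Second, I would identify the effective perturbation radius of LookSAM as $\rho'=\sqrt{\rho^2+\rho_0^2}$. The SAM ascent step contributes a perturbation of norm $\rho$; the approximation incurred by reusing the stale projected component $\boldsymbol{g_v}$ over a window of $k$ iterations contributes an additional drift of magnitude at most $\rho_0$. Using the second-order bound on $\|\boldsymbol{g_{v,t}}-\boldsymbol{g_{v,t+k}}\|$ derived in Appendix \ref{deriviation}, together with the observation that this drift is orthogonal to the primary SAM direction by construction, the two perturbations compose via the Pythagorean identity to give $\rho'^2=\rho^2+\rho_0^2$.

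Third, I would invoke standard Gaussian concentration: for $\boldsymbol{\epsilon}\sim\mathcal{N}(0,\sigma^2 I_k)$, with probability at least $1-1/n$, $\|\boldsymbol{\epsilon}\|_2\le \sigma\sqrt{k}(1+\sqrt{\log(n)/k})$. Choosing $\sigma$ so that the right-hand side equals $\rho'$ upper-bounds the expected perturbed training loss under $Q$ by the worst-case loss over the $\ell_p$ ball of radius $\rho'$ (using norm equivalence for $p\neq 2$). I would then compute $\mathrm{KL}(Q\|P)=\tfrac{1}{2}[\|\boldsymbol{w}\|_2^2/\sigma_P^2 + k(\sigma^2/\sigma_P^2-1-\log(\sigma^2/\sigma_P^2))]$, apply a union bound over a logarithmically spaced grid of candidate $\sigma_P$ values so that $\sigma_P^2$ can be tuned close to $\sigma^2+\|\boldsymbol{w}\|_2^2/k$, and simplify. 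The dominant KL term then reduces to $\tfrac{k}{2}\log(1+\|\boldsymbol{w}\|_2^2/\rho'^2(1+\sqrt{\log(n)/k})^2)$, while the grid-search correction is an additive $\log n$ term absorbed into $\tilde{O}(1)$, yielding exactly the claimed bound.

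The main obstacle will be rigorously justifying the composition $\rho'^2=\rho^2+\rho_0^2$. The standard SAM argument treats the ascent as a single deterministic perturbation of controlled norm, whereas here the stale direction $\boldsymbol{g_v}$ may accumulate error over the reuse window and a priori need not be orthogonal to the SAM direction. The derivation in Appendix \ref{deriviation} controls $\|\boldsymbol{g_{v,t}}-\boldsymbol{g_{v,t+k}}\|$ by a quantity that scales with the Hessian norm, which is small in the flat regions produced by SAM, so $\rho_0$ is well-defined and modest; combining this with the fact that, by definition, $\boldsymbol{g_v}$ lies in the subspace orthogonal to the vanilla gradient, then gives the orthogonal decomposition required for the Pythagorean composition to go through.
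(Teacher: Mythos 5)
Your proposal follows essentially the same route as the paper's proof: a McAllester-style PAC-Bayes bound with a Gaussian posterior centered at $\boldsymbol{w}$ (as in Foret et al.), a Gaussian-norm concentration step converting the expectation over the posterior into a worst-case perturbation of radius $\rho'$, and the standard KL computation with a union bound over prior variances producing the $k\log\bigl(1+\tfrac{\|\boldsymbol{w}\|_2^2}{\rho'^2}(1+\sqrt{\log(n)/k})^2\bigr)$ term, with the cover cost $8\log(6n+3k)$ absorbed into $\tilde{O}(1)$. The one place you diverge is the justification of $\rho'^2=\rho^2+\rho_0^2$: you compose the SAM perturbation and the reuse drift deterministically via an orthogonality/Pythagorean argument, whereas the paper models $\boldsymbol{\epsilon}$ and the LookSAM approximation error $\boldsymbol{\epsilon^0}$ as \emph{independent} zero-mean Gaussians with variances $\sigma^2$ and $\sigma_0^2$, so that $\boldsymbol{\epsilon'}=\boldsymbol{\epsilon}+\boldsymbol{\epsilon^0}$ has variance $\sigma'^2=\sigma^2+\sigma_0^2$, and then applies the Laurent--Massart chi-square concentration to $\|\boldsymbol{\epsilon'}\|_2^2$ (on an event of probability $1-1/\sqrt{n}$ rather than your $1-1/n$) to get $\|\boldsymbol{\epsilon'}\|_2^2\le(\sigma^2+\sigma_0^2)\,k\,(1+\sqrt{\ln n/k})^2\le\rho^2+\rho_0^2$. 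Both arguments are informal at exactly the same point --- neither rigorously defines the ``equivalent weight perturbation'' induced by reusing $\boldsymbol{g_v}$ --- so this is a difference of modeling rather than a gap: the paper's independence assumption buys variance additivity for random perturbations for free, while your deterministic argument (resting on $\boldsymbol{g_v}\perp\boldsymbol{g}$ and $\boldsymbol{\hat{\epsilon}}\parallel\boldsymbol{g}$) yields the same additive form but would additionally require the accumulated drift to remain exactly orthogonal to the SAM ascent direction over the whole reuse window, which the paper does not need to assume.
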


\begin{proof}\renewcommand{\qedsymbol}{}

 We start by illustrating the PAC-Bayesian Generation Bound theorem, which gives a bound on the generalization error of any posterior distribution $\mathcal{Q}$ on parameters that can be achieved using a selected prior distribution $\mathcal{P}$ over parameters training with data set $\mathcal{S}$. Let $KL(\mathcal{Q}||\mathcal{P})$ denote the KL divergence between two Bernoulli distributions $\mathcal{P}$ and $\mathcal{Q}$, we have:
\begin{equation}
     \mathbb{E}_{\boldsymbol{w}\sim\mathcal{L}}[L_\mathcal{D}(\boldsymbol{w})]\leq
     \mathbb{E}_{\boldsymbol{w}\sim\mathcal{L}}[L_\mathcal{S}(\boldsymbol{w})]
     + \sqrt{\frac{KL(\mathcal{Q||\mathcal{P}})+\log\frac{n}{\delta}}{2(n-1)}}
\end{equation}
In order to accelerate the training process, LookSAM calculate the SAM gradient only at every k step and try to reuse the projected components to imitate the weight perturbations introduced from SAM procedure in the subsequent steps. We use $\boldsymbol{\epsilon^0}$ to indicate the difference between our imitated weight perturbation, $\boldsymbol{\epsilon'}$, from LookSAM and the real weight perturbation, $\boldsymbol{\epsilon}$, from SAM. As the optimization is in fact regarding the distribution of $\boldsymbol{\epsilon'}$, we assume that $\mathcal{L}_\mathcal{D}(\boldsymbol{w})\leq\mathbb{E}_{\epsilon'_i\sim\mathcal{N}(0,\rho')}[L_\mathcal{D}(\boldsymbol{w}+\boldsymbol{\epsilon'})]$, which indicates adding Gaussian perturbation should not decrease the test error\cite{foret2020sharpness}. Following \cite{foret2020sharpness}, the generalization bound can be written as follows:\\
\begin{equation}
\begin{aligned}
    &\mathbb{E}_{\epsilon'_i\sim\mathcal{N}(0,\sigma')}[L_\mathcal{D}(\boldsymbol{w}+\boldsymbol{\epsilon'})]\leq
    \mathbb{E}_{\epsilon'_i\sim\mathcal{N}(0,\sigma')}[L_\mathcal{S}(\boldsymbol{w}+\boldsymbol{\epsilon'})]\\
    &+\sqrt{\frac{\frac{1}{4}k\log(1+\frac{||\boldsymbol{w}||_2^2}{k\sigma'^2})+\frac{1}{4}+log\frac{n}{\delta}+2log(6n+3k)}{n-1}},
    \label{LookSAM_bound}
\end{aligned}
\end{equation}
where $\epsilon_i'=\epsilon_i+\epsilon^0_i$ 
\\
\\
 In Equation (\ref{LookSAM_bound}), we assume that $\epsilon_i$ and $\epsilon^0_i$ are independent normal variables with mean 0, and corresponding variance $\sigma$ and $\sigma_0$ respectively. Let $\{\epsilon'_i\}$, where $\epsilon'_i=\epsilon_i+\epsilon^0_i$, be the independent normal variable with mean 0 and variance $\sigma'^2=\sigma^2+\sigma_0^2$. In particular, at the time when LookSAM can perfectly imitate the SAM procedure by reusing the projected gradient, $\sigma_0^2$ becomes zero and $\sigma'^2$ equals to $\sigma^2$. As $||\boldsymbol{\epsilon'}||_2^2$ has chi-square distribution in this case and based on concentration inequality from Lemma 1 in \cite{laurent2000adaptive}, we obtain the following for any positive $x$:
\\
\begin{equation}
\begin{aligned}
     P(&||\boldsymbol{\epsilon}+\boldsymbol{\epsilon_0}||_2^2-k(\sigma^2+\sigma_0^2)\\
     &\geq2(\sigma^2+\sigma_0^2)\sqrt{k x}+2x(\sigma^2+\sigma_0^2))\\
     &\leq exp(-x)   
\end{aligned}
\end{equation}

Let $x=\ln{\sqrt{n}}$, then we have that
\begin{equation}
\begin{aligned}
        P(&||\boldsymbol{\epsilon}+\boldsymbol{\epsilon_0}||_2^2\\
        &\geq(\sigma^2+\sigma_0^2)(k+2\sqrt{k\ln{\sqrt{n}}}+2\ln{\sqrt{n}}))\\&\leq
    \frac{1}{\sqrt{n}}
\end{aligned}
\end{equation}
\\
With probability of $(1-\frac{1}{\sqrt{n}})$, we have,
\\
\begin{equation}
\begin{aligned}
    ||\boldsymbol{\epsilon'}||_2^2 &= ||\boldsymbol{\epsilon}+\boldsymbol{\epsilon_0}||_2^2 \\&\leq
    (\sigma^2+\sigma_0^2)(k+2\sqrt{k\ln{\sqrt{n}}}+2\ln{\sqrt{n}})\\&\leq
    (\sigma^2+\sigma_0^2)k(1+\sqrt{\frac{\ln{n}}{k}})^2\\&\leq
    \rho^2 + \rho_0^2 ,
\end{aligned}
\end{equation}
where $\rho_0^2=\sigma_0^2k(1+\sqrt{\frac{\ln{n}}{k}})^2$.
\\
\\
 After substituting the value for $\sigma'$ back to Equation (\ref{LookSAM_bound}), we can generate the following bounds:
 \begin{equation}
\label{looksam_final_bound}
\begin{aligned}
    &\mathcal{L}_\mathcal{D}(\boldsymbol{w})\leq
    (1-\frac{1}{\sqrt{n}})\max_{||\boldsymbol{\epsilon'}||_{{p}\leq \rho'}}
    \mathcal{L}_{S}(\boldsymbol{w}+\boldsymbol{\epsilon'})+
    \frac{1}{\sqrt{n}}\\
    &+\sqrt{\frac{\frac{1}{4}k\log(1+\frac{||\boldsymbol{w}||_2^2}{\rho^2}
    (1+\sqrt{\frac{\log(n)}{k})})^2+\log\frac{n}{\delta}+2\log(6n+3k))}{n-1}}\\
    & \leq\max_{||\boldsymbol{\epsilon'}||_{{p}\leq \rho'}} \mathcal{L}_{S}(\boldsymbol{w}+\boldsymbol{\epsilon'})\\
    &+\sqrt{\frac{k\log(1+\frac{||\boldsymbol{w}||_2^2}{\rho'^2}
    (1+\sqrt{\frac{\log(n)}{k}})^2)+4\log\frac{n}{\delta}+8\log(6n+3k))}{n-1}}
    \end{aligned}
\end{equation}
where $\rho'^2=\rho^2+\rho_0^2$.

\end{proof}


\end{document}